\documentclass[11pt]{article}

\usepackage[T1]{fontenc}
\usepackage[utf8]{inputenc}
\usepackage{lmodern}
\usepackage[margin=1in]{geometry}
\usepackage{amsmath,amssymb,amsthm,mathtools}
\usepackage{algorithm}
\usepackage{algpseudocode}
\usepackage{booktabs}
\usepackage{enumitem}
\usepackage{microtype}
\usepackage[round,authoryear]{natbib}
\usepackage{xurl}
\usepackage{hyperref}
\hypersetup{
  pdftitle={On the Convergence of Stochastic Low-Rank Adaptation},
  pdfauthor={Ru Wang, Chengchang Liu, John C.S. Lui}
}
\newtheorem{theorem}{Theorem}[section]
\newtheorem{lemma}[theorem]{Lemma}
\newtheorem{proposition}[theorem]{Proposition}

\theoremstyle{definition}
\newtheorem{assumption}{Assumption}
\newtheorem{definition}{Definition}[section]

\newcommand{\R}{\mathbb{R}}
\newcommand{\E}{\mathbb{E}}
\newcommand{\inner}[2]{\left\langle#1,#2\right\rangle_F}
\newcommand{\norm}[1]{\left\|#1\right\|_F}
\newcommand{\Bcal}{\mathcal{B}}
\newcommand{\algname}{\textnormal{\textsc{LoRA-NSGDM}}}
\newcommand{\vralgname}{\textnormal{\textsc{LoRA-STORM}}}

\title{On the Convergence of Stochastic Low-Rank Adaptation}

\author{
  Ru Wang\textsuperscript{1,}\thanks{Equal contribution.}\quad
  Chengchang Liu\textsuperscript{2,}\footnotemark[1]\textsuperscript{}
  \thanks{Corresponding author:
    \href{mailto:liuchengchang@westlake.edu.cn}{liuchengchang@westlake.edu.cn}.}\quad
  John C.S. Lui\textsuperscript{1}\\[0.5em]
  \textsuperscript{1}The Chinese University of Hong Kong\\
  \textsuperscript{2}Westlake University
}
\date{}

\begin{document}
\maketitle

\begin{abstract}
Low-rank adaptation (LoRA) optimizes
$J(B,A)=\mathcal L(W_\mathrm{base}+sBA)$ over two adapters
$B \in \mathbb{R}^{m \times r}$ and $A \in \mathbb{R}^{r \times n}$
that form a low-rank update to a frozen pretrained weight matrix $W_\mathrm{base} \in \mathbb{R}^{m \times n}$.
The prior analysis shows LoRA-GD takes $\exp\{\mathcal{O}(\epsilon^{-2})\}$ oracle calls to find
an $\epsilon$-stationary point such that $\|\nabla J(B,A)\|\leq \epsilon$ in the deterministic setting.
We sharpen the analysis and
show that $\mathcal{O}(\epsilon^{-4})$ full-gradient evaluations suffice
for the same first-order criterion.
We further study stochastic LoRA under unbiased gradient estimates and finite
variance.
We propose LoRA-NSGDM, which finds an $\epsilon$-stationary point with
$\mathcal{O}(\epsilon^{-8})$ stochastic oracle complexity.
Under the additional mean-square smoothness condition, we use variance reduction strategy and propose LoRA-STORM, which
improves the stochastic oracle complexity to $\mathcal{O}(\epsilon^{-6})$.
\end{abstract}

\section{Introduction}
\label{sec:introduction}

Low-rank adaptation (LoRA) freezes a pretrained weight matrix
\(W_{\mathrm{base}}\in\R^{m\times n}\) and parameterizes the adapted matrix
\(W\in\R^{m\times n}\) as
\(W=W_{\mathrm{base}}+sBA\), where \(s>0\) is a fixed scaling factor,
\(B\in\R^{m\times r}\), and \(A\in\R^{r\times n}\), with typically
\(r\ll\min\{m,n\}\).  Thus, \(BA\) is the trainable low-rank update
\citep{hu2022lora}. Writing
\(F(X):=\mathcal{L}(W_{\mathrm{base}}+sX)\), LoRA solves
\[
  \min_{B,A}J(B,A):=F(BA).
\]
This parameterization substantially reduces the number of trainable
parameters and the optimizer-state memory, making LoRA a standard tool for
adapting large models, including in privacy-preserving federated and
decentralized fine-tuning
\citep{sun2024federated,ghiasvand2025decentralized}.
The motivation for LoRA is
consistent with earlier evidence that pretrained language models can be
fine-tuned effectively in low-dimensional parameter subspaces
\citep{aghajanyan2021intrinsic}.
Subsequent variants extend LoRA in complementary directions: QLoRA combines
low-rank adaptation with a quantized frozen model \citep{dettmers2023qlora};
LoRA+ uses factor-specific learning rates \citep{hayou2024loraplus}; PiSSA
initializes the factors from principal singular components of the pretrained
weight \citep{meng2024pissa}; and LoRA-GA uses the initial full-fine-tuning
gradient to choose the factor initialization \citep{wang2024loraga}.
Empirical comparisons further show that similar downstream accuracy does not
make LoRA equivalent to full fine-tuning: the two methods can differ in what
they learn and forget and in the spectral structure of their weight updates
\citep{biderman2024loralearns,shuttleworth2025illusion}.

Despite its practical success, even the simple simultaneous two-factor
LoRA-GD update is challenging to analyze.  
As observed in prior work
\citep{sun2024federated,malinovsky2024rac}, even
when \(F\) has a Lipschitz gradient, the factored objective
\(J(B,A)=F(BA)\) generally has no globally Lipschitz gradient in
\((B,A)\) because the map \((B,A)\mapsto BA\) is bilinear.  
Thus standard smooth nonconvex convergence theory does not directly apply to the
simultaneous factor dynamics.  
Existing theory addresses several complementary questions.  It characterizes
LoRA's expressive power \citep{zeng2024expressive}, its landscape in the NTK
and more general regimes \citep{jang2024lorantk,kim2025loraconverges}, and
continuous-time dynamics for matrix factorization
\citep{xu2025loradynamics}.  Other analyses rely on infinite-width limits or
modify the factor update through unequal learning rates, preconditioning, or
structured factors
\citep{hayou2024loraplus,zhang2024riemannian,ding2026dynamics}.  These results
do not provide a finite-time stationarity guarantee for the original
simultaneous two-factor LoRA-GD update on a general smooth nonconvex loss.

Recently, \citet{mu2026loraGD} provides a non-asymptotic analysis of LoRA-GD
under smoothness and lower boundedness of \(F\).  They prove
that LoRA-GD converges at a rate of
\[
  \min_{0\le t<T}\norm{\nabla J(B_t,A_t)}^2
  =\mathcal{O}\!\left(\frac{1}{\log T}\right).
\] 
This rate yields an exponential complexity bound 
\(\exp\{\mathcal{O}(\epsilon^{-2})\}\) for finding an
\(\epsilon\)-stationary point, which is not satisfactory.  
This motivates us to ask

\begin{center}
  \setlength{\fboxsep}{8pt}
  \fbox{\parbox{0.86\linewidth}{
    \centering
    \textit{Q1. Can we improve the iteration complexity of LoRA-GD to a
    polynomial bound?}
  }}
\end{center}

Despite the non-asymptotic analysis for the deterministic setting, the
stochastic counterpart remains open in \citet{mu2026loraGD}, which
identifies the control of a fourth-order noise moment as a central difficulty.
Existing stochastic guarantees either impose additional global regularity and
trajectory conditions~\citep{jiang2024unified} or alter the factor update
\citep{sokolov2025bernoulli}.  Neither establishes convergence of simultaneous
stochastic LoRA under finite variance alone.  This motivates our second
question:

\begin{center}
  \setlength{\fboxsep}{8pt}
  \fbox{\parbox{0.86\linewidth}{
    \centering
    \textit{Q2. Can a simultaneous stochastic LoRA method be proved convergent
    under standard finite variance assumptions?}
  }}
\end{center}

We answer both questions affirmatively through the following contributions.
\begin{itemize}[leftmargin=*]
  \item
    In the deterministic setting, we sharpen the trajectory analysis of
    \citet{mu2026loraGD} and prove that
    \(\mathcal{O}(\epsilon^{-4})\) full-gradient evaluations suffice to
    find \((B,A)\) satisfying \(\norm{\nabla J(B,A)}\le\epsilon\).

  \item
    In the stochastic setting, we develop \algname{}, which combines joint
    normalization of the two factor directions with exponential moving-average
    momentum.  We prove that \algname{} returns random factors \((B,A)\)
    satisfying \(\E\norm{\nabla J(B,A)}\le\epsilon\) with stochastic oracle
    complexity
    \(\mathcal{O}(\epsilon^{-8})\).

  \item When the stochastic oracle additionally satisfies mean-square
    smoothness, we develop \vralgname{}, which evaluates each fresh sample at
    two consecutive iterates to recursively correct the gradient estimator.
    We improve the stochastic oracle complexity to
    \(\mathcal{O}(\epsilon^{-6})\).
\end{itemize}

Table~\ref{tab:results-comparison} summarizes the resulting complexity bounds
using the common stationarity measure of
Definition~\ref{def:factor-stationarity}. 

\begin{table}[t]
  \centering
  \caption{
  We compare the oracle complexity of our results with the best prior bounds for finding the $\epsilon$-stationary point of $J$ (cf. Definition~\ref{def:factor-stationarity}). MSS denotes mean-square smoothness. }
  \label{tab:results-comparison}
  \setlength{\tabcolsep}{5pt}
  \renewcommand{\arraystretch}{1.18}
  \begin{tabular}{@{}llcc@{}}
    \toprule
    Method & Oracle & Oracle complexity & Reference \\
    \midrule
    LoRA-GD & Full gradient
      & \(\exp\{\mathcal{O}(\epsilon^{-2})\}\)
      & \cite{mu2026loraGD} \\
    LoRA-GD & Full gradient
      & \(\mathcal{O}(\epsilon^{-4})\) & Theorem~\ref{thm:deterministic} \\
    \algname{} & Finite variance
      & \(\mathcal{O}(\epsilon^{-8})\) & Theorem~\ref{thm:stochastic} \\
    \vralgname{} & Finite variance + MSS
      & \(\mathcal{O}(\epsilon^{-6})\) & Theorem~\ref{thm:variance-reduced} \\
    \bottomrule
  \end{tabular}
\end{table}

\paragraph{Organization.}
The remainder of the paper is organized as follows.
Section~\ref{sec:preliminaries} introduces the LoRA factorization, the stationarity measure, and the assumptions on the objective and stochastic oracle.  
Section~\ref{sec:deterministic} establishes the polynomial convergence bound for LoRA-GD. 
Section~\ref{sec:stochastic} explains the
limitation of plain LoRA-SGD and develops \algname{} under finite variance.
Section~\ref{sec:variance-reduction} introduces \vralgname{} under
mean-square smoothness and establishes its improved stochastic oracle
complexity.  
Section~\ref{sec:conclusion} concludes the paper.  
Auxiliary lemmas and complete proofs are collected in the appendix.
 \section{Preliminaries}
\label{sec:preliminaries}

\subsection{Notation and the LoRA objective}

Fix integers \(m,n,r\ge1\), with \(r\le\min\{m,n\}\).  
We use the Frobenius inner product and
norm
\[
  \inner{X}{Y}:=\operatorname{tr}(X^\top Y)
  \qquad\text{and}\qquad
  \norm{X}:=\sqrt{\inner{X}{X}}
\]
for matrices of the same size.
For a differentiable scalar function \(G\), \(\nabla G(X)\) denotes its
gradient.

Let \(W_{\mathrm{base}}\in\R^{m\times n}\) be a fixed pretrained matrix,
let \(s>0\) be the fixed LoRA scaling, and let
\(B\in\R^{m\times r}\) and \(A\in\R^{r\times n}\) be the trainable factors.
For the original training loss \(\mathcal{L}\), define
\begin{equation*}
  F(X):=\mathcal{L}(W_{\mathrm{base}}+sX)
  \qquad\text{and}\qquad
  J(B,A):=F(BA).
\end{equation*}
LoRA minimizes \(J(B,A)\). All subsequent smoothness constants refer to \(F\);
for example, if \(\mathcal{L}\) is \(L\)-smooth, then \(F\) is
\(s^2L\)-smooth.

\subsection{Factor geometry and stationarity}
To state the convergence results in one variable, write
\begin{equation*}
  V=\begin{bmatrix}B\\A^\top\end{bmatrix}
    \in\R^{(m+n)\times r}
  \qquad\text{and}\qquad
  J(V):=J(B,A)=F(BA).
\end{equation*}

The following proposition collects the elementary properties of this
parameterization.
\begin{proposition}[\citet{mu2026loraGD}]
\label{prop:factor-geometry}
We have
\begin{equation*}
  \norm V^2=\norm B^2+\norm A^2
  \qquad\text{and}\qquad
  \norm{BA}
  \le\norm B\,\norm A
  \le\frac12\norm V^2.
\end{equation*}
The gradient of the factored objective is
\begin{equation}
  \nabla J(V)
  =\begin{bmatrix}
    \nabla F(BA)A^\top\\
    \nabla F(BA)^\top B
  \end{bmatrix}.
  \label{eq:factor-gradient}
\end{equation}
\end{proposition}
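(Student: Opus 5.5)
The proposition has three parts: a norm identity, a chain of norm inequalities, and a gradient formula. I will check each directly from the definitions.

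\textbf{Norm identity.} The Frobenius norm squared of a block-stacked matrix is the sum of the squared Frobenius norms of its blocks. Transposition preserves the Frobenius norm. Hence
\[
\norm V^2=\norm B^2+\norm{A^\top}^2=\norm B^2+\norm A^2 .
\]

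\textbf{Norm inequalities.} For $\norm{BA}\le\norm B\,\norm A$, I use Cauchy--Schwarz entrywise. The $(i,j)$ entry satisfies
\[
(BA)_{ij}=\sum_k B_{ik}A_{kj},
\qquad
|(BA)_{ij}|^2\le\Bigl(\sum_k B_{ik}^2\Bigr)\Bigl(\sum_k A_{kj}^2\Bigr).
\]
Summing over $i$ and $j$ gives $\norm{BA}^2\le\norm B^2\norm A^2$. For the second inequality, AM--GM gives $\norm B\,\norm A\le\tfrac12(\norm B^2+\norm A^2)$, which equals $\tfrac12\norm V^2$ by the identity above.

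\textbf{Gradient formula.} I will compute the gradient through a first-order expansion. Perturb $B\mapsto B+\Delta_B$ and $A\mapsto A+\Delta_A$. Then
\[
(B+\Delta_B)(A+\Delta_A)=BA+\Delta_BA+B\Delta_A+\Delta_B\Delta_A .
\]
Differentiability of $F$ (which follows from that of $\mathcal L$) gives
\[
J(B+\Delta_B,A+\Delta_A)=J(B,A)+\inner{\nabla F(BA)}{\Delta_BA+B\Delta_A}+o(\norm{\Delta_B}+\norm{\Delta_A}).
\]
Here the quadratic term $\Delta_B\Delta_A$ is $o(\cdot)$ by the submultiplicativity just proved.

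Next I use the trace identities to move the fixed factors onto $\nabla F(BA)$:
\[
\inner{G}{\Delta_BA}=\operatorname{tr}(G^\top\Delta_BA)=\inner{GA^\top}{\Delta_B},
\]
\[
\inner{G}{B\Delta_A}=\inner{B^\top G}{\Delta_A}.
\]
So the partial gradients are
\[
\nabla_B J=\nabla F(BA)A^\top,
\qquad
\nabla_A J=B^\top\nabla F(BA).
\]

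In the stacked variable $V$, the second block is indexed by $A^\top$. Its gradient is therefore the transpose of $\nabla_A J$, namely $\nabla F(BA)^\top B$. This yields \eqref{eq:factor-gradient}.

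The only real bookkeeping is matching this transposed block to the $A^\top$ slot of $V$. Nothing else needs more than the two trace identities.
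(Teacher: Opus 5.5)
Your proof is correct and complete. The paper gives no proof of its own here (it imports the result from \citet{mu2026loraGD}), and your direct verification is the standard argument: the block-norm identity, entrywise Cauchy--Schwarz plus AM--GM, a first-order expansion moved onto the factors by the two trace identities, and the transpose bookkeeping for the $A^\top$ slot. The only cosmetic difference is that where the paper needs submultiplicativity (e.g., in the proof of Lemma~\ref{lem:factor-oracle-variance}), it argues via $\norm{XY}\le\norm X\,\|Y\|_2\le\norm X\norm Y$ rather than your entrywise Cauchy--Schwarz, and the two routes are equivalent.
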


We then define first-order stationarity in the factor space.
\begin{definition}
\label{def:factor-stationarity}
For \(\epsilon\ge0\), a point \(V\) in the factor space is an
\emph{\(\epsilon\)-first-order stationary point} of \(J\) if
\(\norm{\nabla J(V)}\le\epsilon\).  Using
\eqref{eq:factor-gradient}, this condition is equivalently
\begin{equation*}
  \norm{\nabla J(V)}^2
  =\norm{\nabla F(BA)A^\top}^2
    +\norm{B^\top\nabla F(BA)}^2
  \le\epsilon^2.
\end{equation*}
\end{definition}

All complexity statements below use Definition~\ref{def:factor-stationarity}.
For deterministic methods, we report full-gradient complexity for returning
\(V\) with \(\norm{\nabla J(V)}\le\epsilon\).  For stochastic methods, we
report stochastic oracle complexity for returning a random
\(\widetilde V\) with
\(\E\norm{\nabla J(\widetilde V)}\le\epsilon\).  

\subsection{Basic assumptions and stochastic oracle}

\begin{assumption}
\label{ass:smooth}
The function \(F:\R^{m\times n}\to\R\) is continuously differentiable.
There exists a gradient-Lipschitz parameter \(\rho\ge1\) such that, for every
\(X,Y\in\R^{m\times n}\),
\begin{equation*}
  \norm{\nabla F(X)-\nabla F(Y)}
  \le\rho\norm{X-Y}.
\end{equation*}
\end{assumption}

\begin{assumption}
\label{ass:lower-bounded}
There exists a finite lower bound \(F_\star\in\R\) such that
\begin{equation*}
  F(X)\ge F_\star
  \qquad
  \text{for every }X\in\R^{m\times n}.
\end{equation*}
\end{assumption}

Fix an arbitrary initialization \(V_0\).  We denote
\begin{equation*}
  \Delta:=J(V_0)-F_\star\ge0.
\end{equation*}

In the stochastic setting, we access \(F\) through a stochastic gradient
oracle and impose the standard unbiasedness and finite-variance conditions
below.

\begin{assumption}
\label{ass:finite-variance}
For every fixed \(X\in\R^{m\times n}\), the oracle
\(\widehat H(X;\xi)\in\R^{m\times n}\) satisfies
\begin{equation*}
  \E_{\xi}[\widehat H(X;\xi)]
  =\nabla F(X)
  \qquad\text{and}\qquad
  \E_{\xi}[\norm{\widehat H(X;\xi)-\nabla F(X)}^2]
  \le\sigma^2
\end{equation*}
for some finite \(\sigma\ge0\).
\end{assumption}

The oracle estimates \(\nabla F(X)\) at the matrix \(X=BA\). By the chain rule,
the corresponding stochastic gradient of \(J\) is
\begin{equation*}
  \widehat G(V;\xi)
  :=
  \begin{bmatrix}
    \widehat H(BA;\xi)A^\top\\
    \widehat H(BA;\xi)^\top B
  \end{bmatrix}.
\end{equation*}
The following lemma shows that this oracle is unbiased for \(\nabla J(V)\)
and that its variance scales with \(\norm V^2\).
\begin{lemma}
\label{lem:factor-oracle-variance}
Under Assumption~\ref{ass:finite-variance}, we have
\begin{equation}
  \E_{\xi}[\widehat G(V;\xi)]=\nabla J(V)
  \qquad\text{and}\qquad
  \E_{\xi}[\norm{\widehat G(V;\xi)-\nabla J(V)}^2]
  \le\sigma^2\norm{V}^2.
  \label{eq:factor-oracle-variance}
\end{equation}
\end{lemma}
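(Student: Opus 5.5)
Unbiasedness and the variance bound both follow from the linearity of $\widehat G(V;\xi)$ in the oracle output, with $V$ (hence $B$, $A$, and $X=BA$) held fixed. Let $E(\xi):=\widehat H(BA;\xi)-\nabla F(BA)$ denote the oracle error at $X=BA$. Comparing the definition of $\widehat G$ with the gradient formula \eqref{eq:factor-gradient} of Proposition~\ref{prop:factor-geometry}, we get
\begin{equation*}
  \widehat G(V;\xi)-\nabla J(V)
  =\begin{bmatrix} E(\xi)A^\top\\ E(\xi)^\top B\end{bmatrix}.
\end{equation*}

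For unbiasedness, take expectations blockwise. Right-multiplying by the fixed matrix $A^\top$ and transposing-then-multiplying by the fixed $B$ are linear maps, so they commute with $\E_\xi$. Assumption~\ref{ass:finite-variance} gives $\E_\xi[E(\xi)]=0$, so each block has zero mean. Hence $\E_\xi[\widehat G(V;\xi)]=\nabla J(V)$.

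For the variance, the squared Frobenius norm of the stacked matrix splits as
\begin{equation*}
  \norm{\widehat G(V;\xi)-\nabla J(V)}^2
  =\norm{E(\xi)A^\top}^2+\norm{E(\xi)^\top B}^2 .
\end{equation*}
We bound each term with the submultiplicative inequality $\norm{XY}\le\norm X\,\norm Y$, which follows from Cauchy--Schwarz applied row by column. This gives $\norm{EA^\top}^2\le\norm{E}^2\norm A^2$ and $\norm{E^\top B}^2\le\norm E^2\norm B^2$. Summing and using $\norm V^2=\norm B^2+\norm A^2$ from Proposition~\ref{prop:factor-geometry} yields the pointwise bound $\norm{E(\xi)}^2\norm V^2$. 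Taking expectation and applying the variance bound $\E_\xi\norm{E(\xi)}^2\le\sigma^2$ from Assumption~\ref{ass:finite-variance} gives $\sigma^2\norm V^2$.

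There is no real obstacle. The only points needing care are the following. The expectation is over $\xi$ with $V$ fixed, so in later use with random iterates $V_t$ the lemma will be applied conditionally on the past. The Frobenius submultiplicativity should be stated, or one can use the sharper operator-norm bound $\norm{XY}\le\|X\|_2\norm Y$, since either suffices here.
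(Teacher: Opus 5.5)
Your proposal is correct and follows the paper's proof essentially step for step. The paper likewise writes $\widehat G(V;\xi)-\nabla J(V)$ blockwise in terms of the oracle error $Z=\widehat H(BA;\xi)-\nabla F(BA)$, bounds each block via $\norm{XY}\le\norm X\,\|Y\|_2\le\norm X\norm Y$ (so it passes through the operator-norm bound you mention), sums using $\norm V^2=\norm A^2+\norm B^2$, and takes expectations under Assumption~\ref{ass:finite-variance}.
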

 \section{LoRA gradient descent and its improved iteration complexity}
\label{sec:deterministic}
In this section, we analyze deterministic LoRA gradient descent in Algorithm~\ref{alg:lora-gd}.
\begin{algorithm}[H]
  \caption{LoRA gradient descent}
  \label{alg:lora-gd}
  \begin{algorithmic}[1]
    \setlength{\itemsep}{1pt}
    \Require \(B_0\in\R^{m\times r}\), \(A_0\in\R^{r\times n}\),
      horizon \(T\ge1\)
    \For{\(t=0,\ldots,T-1\)}
      \State \(H_t\gets\nabla F(B_tA_t)\)
      \State \(G_{B,t}\gets H_tA_t^\top\) and
        \(G_{A,t}\gets B_t^\top H_t\)
      \State \(d_t\gets\norm{B_t}^2+\norm{A_t}^2+\norm{H_t}\)
      \State \(\eta_t\gets
        \min\{(4\sqrt2\,\rho d_t)^{-1},1\}\)
      \State \(B_{t+1}\gets B_t-\eta_tG_{B,t}\) and
        \(A_{t+1}\gets A_t-\eta_tG_{A,t}\)
    \EndFor
    \State \(\widehat t\in\arg\min_{0\le t<T}
      \{\norm{G_{B,t}}^2+\norm{G_{A,t}}^2\}\)
    \State \Return \((B_{\widehat t},A_{\widehat t})\)
  \end{algorithmic}
\end{algorithm}

\citet{mu2026loraGD} analyze Algorithm~\ref{alg:lora-gd} by writing its factor
updates as
\begin{equation*}
  V_{t+1}=V_t-\eta_t\nabla J(V_t).
\end{equation*}
Their descent estimate yields the weighted budget
\[
  \sum_{t=0}^{T-1}\eta_t\norm{\nabla J(V_t)}^2\le 4\Delta,
\]
so the rate is determined by the cumulative stepsize
\(\sum_{t<T}\eta_t\).  Mu and Klabjan bound the cross term in the
factor-norm recursion separately at every iteration using Young's inequality.
This gives \(\norm{V_t}^2=\mathcal{O}(t+1)\), and hence
\(\eta_t=\Omega((t+1)^{-1})\).  The resulting harmonic sum leads to
\(\min_{t<T}\norm{\nabla J(V_t)}^2=\mathcal{O}(1/\log T)\).

We sharpen this analysis without changing either the algorithm or its
stepsize.  By summing the factor-norm recursion before applying
Cauchy--Schwarz and reusing the weighted descent budget, we obtain the tighter
trajectory bound
\(\norm{V_t}^2=\mathcal{O}(\sqrt{t+1})\).  Moreover, descent and smoothness
keep \(\norm{H_t}\) uniformly bounded.  It follows that the stepsize
denominator satisfies \(d_t=\mathcal{O}(\sqrt{t+1})\), and hence
\(\eta_t=\Omega((t+1)^{-1/2})\).  Therefore,
\(\sum_{t<T}\eta_t=\Omega(\sqrt T)\), and the same weighted budget gives
\(\min_{t<T}\norm{\nabla J(V_t)}=\mathcal{O}(T^{-1/4})\).  
This sharper trajectory estimate yields the following improved iteration
complexity for LoRA-GD.
\begin{theorem}
\label{thm:deterministic}
Under Assumptions~\ref{ass:smooth} and~\ref{ass:lower-bounded}, for any
\(\epsilon\in(0,1]\),
it suffices to run Algorithm~\ref{alg:lora-gd} with 
\begin{equation*}
  T=\mathcal{O}\!\left(
    \left\{1+\Delta^2\left[
      1+\rho\norm{V_0}^2+\rho\Delta
      +\rho^{3/2}\sqrt{\Delta}
    \right]^2\right\}\epsilon^{-4}
  \right).
\end{equation*}
The output then satisfies
\(\norm{\nabla J(V_{\widehat t})}\le\epsilon\), where
\(V_{\widehat t}=[B_{\widehat t}^\top,\,A_{\widehat t}]^\top\).
\end{theorem}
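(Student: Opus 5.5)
The plan follows the sketch preceding the statement, in three steps: (i) a descent inequality that yields the weighted budget; (ii) a trajectory bound on $\norm{V_t}^2$ and a uniform bound on $\norm{H_t}$; (iii) a lower bound on $\sum_t\eta_t$, which converts the budget into a rate.

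\textbf{Step (i): descent.} Write $X_t=B_tA_t$. Then
\[
X_{t+1}-X_t=-\eta_t\bigl(G_{B,t}A_t+B_tG_{A,t}\bigr)+\eta_t^2G_{B,t}G_{A,t}.
\]
By Assumption~\ref{ass:smooth},
\[
F(X_{t+1})\le F(X_t)+\inner{H_t}{X_{t+1}-X_t}+\tfrac{\rho}{2}\norm{X_{t+1}-X_t}^2 .
\]
The linear term equals $-\eta_t\norm{\nabla J(V_t)}^2+\eta_t^2\inner{H_t}{G_{B,t}G_{A,t}}$. The cross term is bounded by $\eta_t^2\norm{H_t}\norm{\nabla J(V_t)}^2/2$. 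The quadratic term is bounded by $\rho\eta_t^2\norm{V_t}^2\norm{\nabla J(V_t)}^2$ up to constants, plus a quartic piece $\eta_t^4\norm{\nabla J}^4$. That quartic piece is controlled using $\norm{\nabla J(V_t)}\le\norm{H_t}\norm{V_t}$ and $\eta_t\le(4\sqrt2\rho d_t)^{-1}$, with $\rho\ge1$ and $\eta_t\le1$. The choice of $d_t$ is designed so that all three corrections are absorbed, giving
\[
J(V_{t+1})\le J(V_t)-\tfrac{\eta_t}{4}\norm{\nabla J(V_t)}^2 .
\]
Telescoping and using Assumption~\ref{ass:lower-bounded} gives $\sum_{t<T}\eta_t\norm{\nabla J(V_t)}^2\le 4\Delta$, which is the budget attributed to \citet{mu2026loraGD}. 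Monotonicity also gives $F(X_t)\le F(X_0)$ for all $t$.

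\textbf{Step (ii): bounds on $\norm{H_t}$ and $\norm{V_t}^2$.} For $\norm{H_t}$, the standard consequence of $\rho$-smoothness and lower boundedness is $\norm{\nabla F(X)}^2\le 2\rho(F(X)-F_\star)$. Applied along the trajectory, it gives $\norm{H_t}\le\sqrt{2\rho\Delta}$ for all $t$.

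For the factor norm, I would expand
\[
\norm{V_{t+1}}^2=\norm{V_t}^2-2\eta_t\inner{V_t}{\nabla J(V_t)}+\eta_t^2\norm{\nabla J(V_t)}^2 .
\]
The last term sums to at most $4\Delta$, since $\eta_t\le1$. The cross term satisfies $\inner{V_t}{\nabla J(V_t)}=2\inner{H_t}{X_t}$, but I will not exploit its sign. Instead I sum before estimating and apply Cauchy--Schwarz to the sum:
\[
\sum_{k<t}\eta_k\norm{V_k}\norm{\nabla J(V_k)}\le\Bigl(\sum_{k<t}\eta_k\norm{V_k}^2\Bigr)^{1/2}(4\Delta)^{1/2}.
\]
Next I use $\eta_k\norm{V_k}^2\le\eta_k d_k\le(4\sqrt2\rho)^{-1}$. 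This gives
\[
\norm{V_t}^2\le\norm{V_0}^2+4\Delta+c\sqrt{\Delta t/\rho},
\]
so $\norm{V_t}^2=\mathcal{O}(\sqrt{t+1})$ with explicit constants.

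\textbf{Step (iii): conclusion.} Combining the two bounds,
\[
d_t\le\norm{V_0}^2+4\Delta+\sqrt{2\rho\Delta}+c\sqrt{\Delta(t+1)/\rho}.
\]
Hence $\eta_t\ge\min\{1,(4\sqrt2\rho d_t)^{-1}\}$, and summing gives
\[
\sum_{t<T}\eta_t\gtrsim\frac{\sqrt T}{1+\rho\norm{V_0}^2+\rho\Delta+\rho^{3/2}\sqrt\Delta+\sqrt{\rho\Delta}} .
\]
Dividing the budget by this sum bounds $\min_t\norm{\nabla J(V_t)}^2$. Requiring the result to be at most $\epsilon^2$ yields $T$ of the stated form, where the $1+\Delta^2[\cdots]^2$ shape comes from squaring $\Delta\cdot[\cdots]/\sqrt T\le\epsilon^2$. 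The output index $\widehat t$ attains this minimum by construction.

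The main obstacle is step (i): verifying that the specific stepsize $\eta_t=\min\{(4\sqrt2\rho d_t)^{-1},1\}$ absorbs both the $\norm{H_t}$-weighted cross term and the quartic bilinear term. I would first try bounding $\norm{X_{t+1}-X_t}\le\eta_t\norm{\nabla J(V_t)}\bigl(\sqrt2\norm{V_t}+\eta_t\norm{\nabla J(V_t)}\bigr)$, then use $\eta_t\norm{\nabla J(V_t)}\le\eta_t\norm{H_t}\norm{V_t}\le\norm{V_t}/(4\sqrt2\rho)$. If the constants do not close, I would fall back on the descent lemma of \citet{mu2026loraGD}, which the paper indicates already holds with constant $4$.
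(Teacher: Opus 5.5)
Your proposal is correct and follows the paper's proof essentially step for step. It uses the same budget $\sum_t\eta_t\norm{\nabla J(V_t)}^2\le4\Delta$, the same bound $\norm{H_t}\le\sqrt{2\rho\Delta}$, and the same sum-then-Cauchy--Schwarz argument (your weights $\sqrt{\eta_k}\norm{V_k}$ and $\sqrt{\eta_k}\norm{\nabla J(V_k)}$, with $\eta_k\norm{V_k}^2\le(4\sqrt2\rho)^{-1}$, reproduce exactly the paper's $4\sqrt{a\Delta t}$ term), followed by $\min\{1,1/x\}\ge1/(1+x)$ and $T=\lceil C\epsilon^{-4}\rceil$; the only difference is that the paper simply cites the one-step lemma of \citet{mu2026loraGD} for step (i), whereas your direct derivation also closes, since the three correction terms total well below $\tfrac34\eta_t\norm{\nabla J(V_t)}^2$ under the given stepsize.
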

 \section{LoRA normalized stochastic gradient descent with momentum}
\label{sec:stochastic}
In this section, we establish convergence guarantees for LoRA in the
stochastic setting.  We first show that LoRA-SGD may fail to converge under
only a finite-variance assumption.  We then propose normalized stochastic
gradient descent with momentum (NSGDM) and prove a non-asymptotic convergence
guarantee under the same mild assumptions.

\subsection{LoRA-SGD fails to converge}

We first show why Assumptions~\ref{ass:smooth},
\ref{ass:lower-bounded}, and~\ref{ass:finite-variance} do not suffice for plain simultaneous
LoRA-SGD, which updates according to
\begin{equation}
  \begin{aligned}
    B_{t+1}=B_t-\eta_t\widehat H(B_tA_t;\xi_t)A_t^\top
    \qquad\text{and}\qquad
    A_{t+1}=A_t-\eta_tB_t^\top\widehat  H(B_tA_t;\xi_t),
  \end{aligned}
  \label{eq:plain-lora-sgd}
\end{equation}
where each finite \(\eta_t>0\) may depend on all previous samples and
iterates but is chosen before drawing \(\xi_t\).  
The following proposition shows that even when the stepsizes are chosen adaptively, the iterates of \eqref{eq:plain-lora-sgd} may diverge in expectation.

\begin{proposition}
\label{prop:plain-lora-sgd-hard-instance}
There are a scalar loss, stochastic oracle, and initialization satisfying
Assumptions~\ref{ass:smooth}, \ref{ass:lower-bounded},
and~\ref{ass:finite-variance} such that,
for every stepsize rule where \(\eta_t>0\) may depend on all previous samples and
iterates but is chosen before drawing \(\xi_t\), the iterates of
\eqref{eq:plain-lora-sgd} satisfy, for every \(t\ge1\),
\begin{equation}
  \E[J(V_t)]=+\infty
  \qquad\text{and}\qquad
  \E\norm{\nabla J(V_t)}=+\infty.
  \label{eq:plain-sgd-hard-conclusion}
\end{equation}
\end{proposition}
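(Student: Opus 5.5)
We will build a one-dimensional counterexample with \(m=n=r=1\), so \(B=b\), \(A=a\), \(V=(b,a)\) and \(J(V)=F(ba)\). The loss is the quadratic \(F(x)=\tfrac12 x^2\). It satisfies Assumption~\ref{ass:smooth} with \(\rho=1\) and Assumption~\ref{ass:lower-bounded} with \(F_\star=0\). For the oracle we take \(\widehat H(x;\xi)=x+\xi\), where \(\xi\) is a symmetric random variable with \(\E\xi=0\) and \(\E\xi^2=\sigma^2<\infty\), but with \(\E|\xi|^p=+\infty\) for \(p=4\). Any \(p\in(2,4]\) works; for example a symmetric density with tail \(\propto |u|^{-5}\) has finite variance and infinite fourth moment. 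Assumption~\ref{ass:finite-variance} then holds. This choice targets exactly the fourth-order noise moment that the introduction flags as the obstruction. We initialize at \(b_0=a_0=1\).

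Next we compute one step. With \(x_t=b_ta_t\) and \(h_t=x_t+\xi_t\), the update \eqref{eq:plain-lora-sgd} reads \(b_{t+1}=b_t-\eta_t h_t a_t\) and \(a_{t+1}=a_t-\eta_t h_t b_t\). Hence
\[
x_{t+1}=b_ta_t-\eta_t h_t(a_t^2+b_t^2)+\eta_t^2h_t^2\,b_ta_t .
\]
Conditional on \(\mathcal F_t\), which contains \(\eta_t,b_t,a_t\), this is a quadratic polynomial in \(\xi_t\) with leading coefficient \(\eta_t^2x_t\). So whenever \(x_t\neq0\) and \(\eta_t>0\), we have \(|x_{t+1}|\ge \tfrac12\eta_t^2|x_t|\xi_t^2\) for \(|\xi_t|\) large. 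Then \(\E[J(V_{t+1})\mid\mathcal F_t]=\tfrac12\E[x_{t+1}^2\mid\mathcal F_t]=+\infty\), because \(x_{t+1}^2\) grows like \(\xi_t^4\). The gradient is \(\nabla J(V)=x(a,b)\), so \(\norm{\nabla J(V)}=|x|\,\norm V\). Moreover \(\norm{V_{t+1}}\gtrsim \eta_t|h_t|\norm{V_t}\), so \(\norm{\nabla J(V_{t+1})}\) grows like \(\xi_t^3\) in the tail. Since \(\E|\xi|^3\) may be finite, we strengthen the noise so that \(\E|\xi|^3=+\infty\) while \(\E\xi^2<\infty\), for instance with tail density \(\propto|u|^{-4}\). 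Then \(J\) is infinite in expectation as well, and both conclusions of \eqref{eq:plain-sgd-hard-conclusion} follow at step \(t+1\) provided \(x_t\neq0\) almost surely.

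To pass from conditional to unconditional expectations for every \(t\ge1\), we show by induction that \(x_t\neq0\) and \(V_t\neq0\) almost surely. Given \(\mathcal F_t\), the event \(x_{t+1}=0\) is the root set of a nonzero quadratic in \(\xi_t\), which has measure zero if \(\xi\) has a density. Since \(\eta_t\) is \(\mathcal F_t\)-measurable and \(\xi_t\) is independent of \(\mathcal F_t\), conditioning is legitimate. The tower property for nonnegative variables then gives \(\E[J(V_{t+1})]=\E\bigl[\E[J(V_{t+1})\mid\mathcal F_t]\bigr]=+\infty\), since the inner conditional expectation is \(+\infty\) on a full-measure event. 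The same argument gives \(\E\norm{\nabla J(V_{t+1})}=+\infty\).

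The delicate step is the uniform lower bound on the tail that is valid for an arbitrary adaptive \(\eta_t\). We handle it by bounding \(|x_{t+1}|\ge \eta_t^2|x_t|h_t^2-|x_t|-\eta_t|h_t|\norm{V_t}^2\) and integrating over \(\{|\xi_t|\ge R(\mathcal F_t)\}\) for a suitable measurable threshold \(R(\mathcal F_t)\). On this set the quadratic term dominates the other terms. The heavy tail makes the integral infinite for every positive value of \(\eta_t\) and \(x_t\), so no choice of stepsize can avoid the blow-up.
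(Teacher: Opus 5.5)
Your proposal is correct and uses the paper's counterexample. The ingredients match: $F(x)=\tfrac12x^2$, the oracle $\widehat H(x;\xi)=x+\xi$ with a symmetric density whose tail is $\propto|u|^{-4}$ (so $\E\xi^2<\infty$ but $\E|\xi|^3=+\infty$), $b_0=a_0=1$, almost-sure nonvanishing via the density, and the tower property for nonnegative variables. Your first suggestion of a $|u|^{-5}$ tail (or ``any $p\in(2,4]$'') would only give $\E[J(V_t)]=+\infty$ and not the gradient claim, as you noticed yourself.

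The only difference from the paper is bookkeeping. The paper observes that $b_t=a_t$ is preserved, so the common factor satisfies $x_{t+1}=x_t[1-\eta_t(x_t^2+\xi_t)]$, which is affine in $\xi_t$, with $J=\tfrac12x^4$ and $\norm{\nabla J}=\sqrt2|x|^3$. This avoids your quadratic recursion for the product $b_ta_t$. It also avoids the separate bound $\norm{V_{t+1}}\ge(\eta_t|h_t|-1)\norm{V_t}$, which you assert without proof but which follows from the reverse triangle inequality. Your version has the minor advantage of needing only $b_0a_0\neq0$.
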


The difficulty also appears in a direct factor-space analysis. 
The LoRA-SGD update
 \eqref{eq:plain-lora-sgd} is equivalently
\(V_{t+1}=V_t-\eta_t\widehat G(V_t;\xi_t)\).  Applying the modified descent lemma
in Lemma~\ref{lem:appendix-modified-descent} with
\(U=-\eta_t\widehat G(V_t;\xi_t)\) gives
\begin{align}
  J(V_{t+1})- J(V_t)
  &\leq-\eta_t\inner{\nabla J(V_t)}{\widehat G(V_t;\xi_t)}
    +\eta_t^2\bigl(\sqrt{2}\rho\norm{V_t}^2+\norm{H(V_t)}\bigr)
    \norm{\widehat G(V_t;\xi_t)}^2 \nonumber\\
  &\quad+\sqrt{2}\rho\eta_t^3\norm{V_t}
    \norm{\widehat G(V_t;\xi_t)}^3
    +\frac{\sqrt{2}\rho}{4}\eta_t^4
    \norm{\widehat G(V_t;\xi_t)}^4.
  \label{eq:plain-sgd-fourth-moment}
\end{align}
Assumption~\ref{ass:finite-variance} controls only the second moment of the
oracle noise and therefore does not control the cubic and quartic terms in
\eqref{eq:plain-sgd-fourth-moment}, a central challenge identified by
\citet{mu2026loraGD}.

\subsection{LoRA-NSGDM and its convergence}

The difficulty above originates from the random length
\(\eta_t\norm{\widehat G(V_t;\xi_t)}\) of a plain LoRA-SGD step, which
produces the cubic and quartic stochastic-gradient terms in
\eqref{eq:plain-sgd-fourth-moment}.  We first normalize the update so that
every step has the prescribed length \(\gamma\).  This removes the need to
control higher-order moments of the stochastic gradient, but it does not by
itself make the update direction accurate.  Indeed,
Lemma~\ref{lem:factor-oracle-variance} gives
\[
  \E_{\xi_t}\!\left[
    \norm{\widehat G(V_t;\xi_t)-\nabla J(V_t)}^2
  \right]
  \leq \sigma^2\norm{V_t}^2,
\]
so the factor-gradient oracle may become increasingly noisy as
\(\norm{V_t}\) grows.  We therefore use momentum to average the stochastic
gradients before normalizing the resulting direction.  Algorithm~\ref{alg:lora-nsgdm}
combines these two ideas: it first updates the momentum estimator and then
takes a length-\(\gamma\) step along the averaged direction.

\begin{algorithm}[H]
  \caption{
    LoRA-NSGDM}
  \label{alg:lora-nsgdm}
  \begin{algorithmic}[1]
    \setlength{\itemsep}{1pt}
    \Require \(B_0\in\R^{m\times r}\), \(A_0\in\R^{r\times n}\),
      horizon \(T\ge1\)
    \State \(\alpha\gets T^{-1/2}\) and \(\gamma\gets T^{-7/8}\)
    \State \(M_{B,-1}\gets0\) and \(M_{A,-1}\gets0\)
    \For{\(t=0,\ldots,T-1\)}
      \State Draw \(\xi_t\) and set
        \(\widehat H_t\gets\widehat H(B_tA_t;\xi_t)\)
      \State \(Q_{B,t}\gets\widehat H_tA_t^\top\) and
        \(Q_{A,t}\gets B_t^\top\widehat H_t\)
      \State \(M_{B,t}\gets(1-\alpha)M_{B,t-1}+\alpha Q_{B,t}\)
      \State \(M_{A,t}\gets(1-\alpha)M_{A,t-1}+\alpha Q_{A,t}\)
      \State \(s_t\gets
        (\norm{M_{B,t}}^2+\norm{M_{A,t}}^2)^{1/2}\)
      \State \(B_{t+1}\gets B_t-\gamma M_{B,t}/s_t\) and
        \(A_{t+1}\gets A_t-\gamma M_{A,t}/s_t\)
    \EndFor
    \State Draw \(I\) uniformly from \(\{0,\ldots,T-1\}\)
    \State \Return \((B_I,A_I)\)
  \end{algorithmic}
\end{algorithm}

We next give a non-asymptotic analysis of Algorithm~\ref{alg:lora-nsgdm}.
Define
\(
  M_t=\begin{bmatrix}M_{B,t}^{\top},&M_{A,t}\end{bmatrix}^{\top}
\). The algorithm then updates \(V_t\) according to
\begin{equation}
  \begin{aligned}
    M_t&=(1-\alpha)M_{t-1}+\alpha\widehat G(V_t;\xi_t)\\
    \text{and}\qquad
    V_{t+1}&=V_t-\gamma\frac{M_t}{\norm{M_t}}
  \end{aligned}
  \label{eq:nsgdm-update}
\end{equation}
We introduce the deterministic radius and local smoothness scale
\begin{equation}
  R_T=\norm{V_0}+T\gamma
  \qquad\text{and}\qquad
  L_T=\norm{\nabla F(0)}+\frac{3\rho R_T^2}{2}.
  \label{eq:nsgdm-scales}
\end{equation}
The following lemma shows that the normalization controls the trajectory of the iterates.
\begin{lemma}
\label{lem:nsgdm-local-control}
Suppose Assumptions~\ref{ass:smooth} and~\ref{ass:finite-variance} hold.
Fix any horizon \(T\ge1\), and let
\(\{V_t\}_{t=0}^{T}\) and \(\{M_t\}_{t=-1}^{T-1}\) be generated by
Algorithm~\ref{alg:lora-nsgdm}.  With \(R_T\) and \(L_T\) defined in
\eqref{eq:nsgdm-scales}, all iterates lie in the ball
\begin{equation*}
  V_t\in\Bcal_{R_T}:=\{V:\norm V\le R_T\}
  \qquad\text{for}\qquad
  0\le t\le T.
\end{equation*}
Moreover, \(J\) is \(L_T\)-smooth on this ball:
\begin{equation}
  \norm{\nabla J(V)-\nabla J(U)}
  \le L_T\norm{V-U}
  \qquad
  \text{for all }U,V\in\Bcal_{R_T}.
  \label{eq:nsgdm-local-smoothness}
\end{equation}
For every \(0\le t<T\), the factor-gradient oracle satisfies
\begin{equation}
  \E_{\xi_t}[\widehat G(V_t;\xi_t)]=\nabla J(V_t)
  \qquad\text{and}\qquad
  \E_{\xi_t}\norm{\widehat G(V_t;\xi_t)-\nabla J(V_t)}^2
  \le\sigma^2R_T^2,
  \label{eq:nsgdm-noise-scale}
\end{equation}
where the expectation is over the fresh sample \(\xi_t\) with \(V_t\)
held fixed.
\end{lemma}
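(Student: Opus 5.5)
The lemma has three parts: a trajectory bound, local smoothness on the ball, and the noise bound. I would prove them in that order.

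\textbf{Trajectory bound.} Each step of \eqref{eq:nsgdm-update} has Frobenius length exactly \(\gamma\), since \(\norm{M_t/\norm{M_t}}=1\). If \(M_t=0\), I would adopt the convention that the step is zero; the bound below only improves. The triangle inequality then gives \(\norm{V_t}\le\norm{V_0}+t\gamma\le R_T\) for all \(0\le t\le T\). This holds pathwise, for every realization of the samples, which is what makes \(R_T\) deterministic.

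\textbf{Local smoothness.} Take \(U=[B_U;A_U^\top]\) and \(V=[B;A^\top]\) in \(\Bcal_{R_T}\). By \eqref{eq:factor-gradient}, the \(B\)-block of \(\nabla J(V)-\nabla J(U)\) is
\[
  \nabla F(BA)A^\top-\nabla F(B_UA_U)A_U^\top
  =\bigl(\nabla F(BA)-\nabla F(B_UA_U)\bigr)A^\top
   +\nabla F(B_UA_U)(A-A_U)^\top .
\]
The \(A\)-block splits the same way. I would bound the pieces as follows.
\begin{itemize}
  \item \emph{Gradient size.} Assumption~\ref{ass:smooth} together with Proposition~\ref{prop:factor-geometry} gives \(\norm{\nabla F(B_UA_U)}\le\norm{\nabla F(0)}+\rho\norm{B_UA_U}\le\norm{\nabla F(0)}+\tfrac{\rho}{2}R_T^2\).
  \item \emph{Gradient difference.} Using \(BA-B_UA_U=(B-B_U)A+B_U(A-A_U)\), we get \(\norm{\nabla F(BA)-\nabla F(B_UA_U)}\le\rho\bigl(\norm{B-B_U}\norm A+\norm{B_U}\norm{A-A_U}\bigr)\). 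The two norms multiplying the differences are both at most \(R_T\), so Cauchy--Schwarz gives at most \(\rho R_T\norm{V-U}\).
\end{itemize}

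Collecting the block terms is the step needing care, because the target constant \(\tfrac32\rho R_T^2+\norm{\nabla F(0)}\) must come out without stray \(\sqrt2\) factors. Stacking the blocks, the first-type terms form \(\bigl[(\nabla F(BA)-\nabla F(B_UA_U))A^\top;\;(\nabla F(BA)-\nabla F(B_UA_U))^\top B\bigr]\). Its norm is at most \(\norm{\nabla F(BA)-\nabla F(B_UA_U)}\sqrt{\norm A^2+\norm B^2}\le\rho R_T^2\norm{V-U}\).

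The second-type terms form \(\bigl[\nabla F(B_UA_U)(A-A_U)^\top;\;\nabla F(B_UA_U)^\top(B-B_U)\bigr]\). Using the operator-norm bound \(\norm{XY}\le\|X\|_{op}\norm Y\le\norm X\norm Y\), its norm is at most \(\norm{\nabla F(B_UA_U)}\norm{V-U}\le\bigl(\norm{\nabla F(0)}+\tfrac{\rho}{2}R_T^2\bigr)\norm{V-U}\).

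Adding the two gives exactly \(L_T\), which proves \eqref{eq:nsgdm-local-smoothness}.

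\textbf{Noise bound.} Condition on the history, so that \(V_t\) is fixed and \(\xi_t\) is a fresh sample. Lemma~\ref{lem:factor-oracle-variance} then gives unbiasedness and a variance bound of at most \(\sigma^2\norm{V_t}^2\). Combining this with \(\norm{V_t}\le R_T\) from the first step yields \eqref{eq:nsgdm-noise-scale}.
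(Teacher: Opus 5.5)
Your outline matches the paper's: a pathwise trajectory bound, the same block split of \(\nabla J(V)-\nabla J(U)\) into a gradient-difference part and a gradient-size part, and then Lemma~\ref{lem:factor-oracle-variance}. Your stacking of the blocks is also correct. The gap is in the gradient-difference bound.

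From \(BA-B_UA_U=(B-B_U)A+B_U(A-A_U)\), Cauchy--Schwarz gives only
\(\norm{BA-B_UA_U}\le(\norm{A}^2+\norm{B_U}^2)^{1/2}\norm{V-U}\).
Because \(A\) is a block of \(V\) and \(B_U\) is a block of \(U\), the sum \(\norm{A}^2+\norm{B_U}^2\) is bounded by \(\norm{V}^2+\norm{U}^2\le 2R_T^2\), not by \(R_T^2\). Your intermediate claim \(\norm{B-B_U}\norm{A}+\norm{B_U}\norm{A-A_U}\le R_T\norm{V-U}\) is in fact false. Take \(m=n=r=1\), \(B=0\), \(A=R_T\), \(B_U=R_T\), \(A_U=0\); both points lie in \(\Bcal_{R_T}\). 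Then the left side is \(2R_T^2\), while \(R_T\norm{V-U}=\sqrt2\,R_T^2\). The true product difference here is \(0\), so the decomposition is lossy rather than the lemma being wrong. Carried through your stacking step, your argument yields the constant \(\norm{\nabla F(0)}+(\sqrt2+\tfrac12)\rho R_T^2\) instead of \(L_T\). That is the stray \(\sqrt2\) you were guarding against, entering one step earlier than you checked.

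The fix, which is what the paper does, is to symmetrize:
\(BA-B_UA_U=\tfrac12\bigl[(B-B_U)(A+A_U)+(B+B_U)(A-A_U)\bigr]\).
Now the multipliers \(A+A_U\) and \(B+B_U\) are the two blocks of the single matrix \(V+U\). Cauchy--Schwarz then gives \(\norm{BA-B_UA_U}\le\tfrac12\norm{V-U}\norm{V+U}\le R_T\norm{V-U}\). Hence \(\norm{\nabla F(BA)-\nabla F(B_UA_U)}\le\rho R_T\norm{V-U}\), and the rest of your argument produces exactly \(L_T=\norm{\nabla F(0)}+\tfrac32\rho R_T^2\). A constant-factor loss would not affect the downstream \(\mathcal{O}(\epsilon^{-8})\) rate, but the lemma as stated asserts this specific \(L_T\).
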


We next quantify how momentum balances its initialization error,
the drift of the true gradient, and the averaged oracle noise.

\begin{lemma}
\label{lem:nsgdm-tracking}
Suppose Assumptions~\ref{ass:smooth} and~\ref{ass:finite-variance} hold.
Fix any horizon \(T\ge1\), and let
\(\{V_t\}_{t=0}^{T}\) and \(\{M_t\}_{t=-1}^{T-1}\) be generated by
Algorithm~\ref{alg:lora-nsgdm}.  Then
\begin{equation}
  \frac1T\sum_{t<T}\E\norm{M_t-\nabla J(V_t)}
  \le\frac{\norm{\nabla J(V_0)}}{\alpha T}
  +\frac{L_T\gamma}{\alpha}+\sigma R_T\sqrt\alpha.
  \label{eq:main-tracking}
\end{equation}
Here the expectation is over the oracle samples used by the algorithm.
\end{lemma}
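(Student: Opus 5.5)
We prove Lemma~\ref{lem:nsgdm-tracking} with the standard unrolling argument for exponential-moving-average momentum. Write \(\epsilon_t:=M_t-\nabla J(V_t)\), the noise \(\zeta_t:=\widehat G(V_t;\xi_t)-\nabla J(V_t)\), and the drift \(S_t:=\nabla J(V_{t-1})-\nabla J(V_t)\) for \(t\ge1\).

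\textbf{Step 1: one-step recursion.} From \eqref{eq:nsgdm-update}, for \(t\ge1\) we have
\[
  \epsilon_t=(1-\alpha)\epsilon_{t-1}+(1-\alpha)S_t+\alpha\zeta_t .
\]
At \(t=0\), since \(M_{-1}=0\), we get \(\epsilon_0=-(1-\alpha)\nabla J(V_0)+\alpha\zeta_0\).

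\textbf{Step 2: unroll.} Iterating the recursion gives
\[
  \epsilon_t=-(1-\alpha)^{t+1}\nabla J(V_0)
  +\sum_{k=1}^{t}(1-\alpha)^{t-k+1}S_k
  +\alpha\sum_{k=0}^{t}(1-\alpha)^{t-k}\zeta_k .
\]
We bound the norm of each of the three pieces by the triangle inequality.

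\emph{Initialization term.} Its norm is at most \((1-\alpha)^{t+1}\norm{\nabla J(V_0)}\). Summing the geometric series over \(t<T\) gives at most \(\norm{\nabla J(V_0)}/\alpha\).

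\emph{Drift term.} By Lemma~\ref{lem:nsgdm-local-control}, all iterates lie in \(\Bcal_{R_T}\), where \(J\) is \(L_T\)-smooth. Each step has length exactly \(\gamma\), so \(\norm{S_k}\le L_T\gamma\) deterministically. The drift term is therefore bounded by \(L_T\gamma\sum_{j\ge1}(1-\alpha)^j\le L_T\gamma/\alpha\).

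\textbf{Step 3: noise term.} This is the only step that uses probability, and it is the main point requiring care. The \(\zeta_k\) form a martingale difference sequence with respect to the filtration generated by \(\xi_0,\dots,\xi_{k-1}\). The reason is that \(V_k\) is determined by the past samples and \(\xi_k\) is fresh, so \(\E[\zeta_k\mid\mathcal F_{k-1}]=0\) by \eqref{eq:nsgdm-noise-scale}. Consequently the cross terms \(\E\inner{\zeta_j}{\zeta_k}\) vanish for \(j<k\), by conditioning on \(\mathcal F_{k-1}\). The conditional variance bound \(\sigma^2R_T^2\) in \eqref{eq:nsgdm-noise-scale} holds uniformly, because \(\norm{V_k}\le R_T\) surely. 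Hence
\[
  \E\Bigl\|\alpha\sum_{k\le t}(1-\alpha)^{t-k}\zeta_k\Bigr\|_F^2
  \le\alpha^2\sigma^2R_T^2\sum_{j\ge0}(1-\alpha)^{2j}
  \le\frac{\alpha^2\sigma^2R_T^2}{1-(1-\alpha)^2}\le\alpha\sigma^2R_T^2,
\]
where the last step uses \(1-(1-\alpha)^2=\alpha(2-\alpha)\ge\alpha\). Jensen's inequality then bounds the expected norm by \(\sigma R_T\sqrt\alpha\) for each \(t\).

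\textbf{Step 4: average.} Taking expectations, summing over \(t<T\) and dividing by \(T\) gives
\[
  \frac1T\sum_{t<T}\E\norm{\epsilon_t}
  \le\frac{\norm{\nabla J(V_0)}}{\alpha T}+\frac{L_T\gamma}{\alpha}+\sigma R_T\sqrt\alpha,
\]
which is \eqref{eq:main-tracking}.

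The only genuine subtlety is Step 3. It needs the variance bound to hold conditionally and uniformly along the random trajectory, and this is guaranteed by the deterministic radius \(R_T\) from Lemma~\ref{lem:nsgdm-local-control}.
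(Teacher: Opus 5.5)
Your proposal is correct and follows the paper's proof essentially line for line: the same error recursion $E_t=(1-\alpha)E_{t-1}+(1-\alpha)(G_{t-1}-G_t)+\alpha Z_t$ with $E_0=-(1-\alpha)G_0+\alpha Z_0$, the same three-term unrolling, the drift bound $L_T\gamma/\alpha$, orthogonality of the noise increments giving $\sigma R_T\sqrt{\alpha}$, and the geometric sum $\sum_{t<T}(1-\alpha)^{t+1}\le 1/\alpha$ for the initialization term. One small correction: each step has length at most $\gamma$ rather than exactly $\gamma$, because $M_t=0$ gives a zero step, but your drift bound only uses the upper bound, so the argument is unaffected.
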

We next establish a one-step descent bound for LoRA-NSGDM on the objective
\(J\).
\begin{lemma}
\label{lem:nsgdm-descent}
Suppose Assumption~\ref{ass:smooth} holds.
Fix any horizon \(T\ge1\), and let
\(\{V_t\}_{t=0}^{T}\) and \(\{M_t\}_{t=-1}^{T-1}\) be generated by
Algorithm~\ref{alg:lora-nsgdm}.  Then, for every \(0\le t<T\),
\begin{equation}
  J(V_{t+1})
  \le J(V_t)-\gamma\norm{\nabla J(V_t)}
  +2\gamma\norm{M_t-\nabla J(V_t)}
  +\frac{L_T\gamma^2}{2}.
  \label{eq:nsgdm-descent}
\end{equation}
\end{lemma}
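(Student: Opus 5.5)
The argument is the standard descent step for normalized momentum, carried out on the ball $\Bcal_{R_T}$, where Lemma~\ref{lem:nsgdm-local-control} supplies smoothness.

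\textbf{Step 1 (quadratic upper bound).} Both $V_t$ and $V_{t+1}$ lie in $\Bcal_{R_T}$ by Lemma~\ref{lem:nsgdm-local-control}. The ball is convex, so the whole segment $V_t+\tau(V_{t+1}-V_t)$, $\tau\in[0,1]$, also lies in it. Hence the local Lipschitz bound \eqref{eq:nsgdm-local-smoothness} holds along the segment, and integrating the gradient gives
\[
J(V_{t+1})\le J(V_t)+\inner{\nabla J(V_t)}{V_{t+1}-V_t}+\frac{L_T}{2}\norm{V_{t+1}-V_t}^2 .
\]
By \eqref{eq:nsgdm-update}, $\norm{V_{t+1}-V_t}=\gamma$, so the last term equals $L_T\gamma^2/2$.

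\textbf{Step 2 (linear term).} Write $g=\nabla J(V_t)$, $M=M_t$ and $e=M-g$. The linear term is $-\gamma\inner{g}{M}/\norm{M}$. Splitting $g=M-e$ gives
\[
-\frac{\inner{g}{M}}{\norm M}=-\norm M+\frac{\inner{e}{M}}{\norm M}\le -\norm M+\norm e .
\]
The triangle inequality gives $\norm M\ge\norm g-\norm e$. Combining the two,
\[
-\frac{\inner{g}{M}}{\norm M}\le -\norm g+2\norm e .
\]
Multiplying by $\gamma$ and inserting this into Step 1 yields \eqref{eq:nsgdm-descent}.

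\textbf{Degenerate case.} If $M_t=0$, the update is undefined. The natural convention is either that no step is taken or that the direction is an arbitrary unit vector. In the first case $J(V_{t+1})=J(V_t)$, and the bound holds because $\norm{e}=\norm g$, so $-\gamma\norm g+2\gamma\norm e\ge0$. In the second case the linear term is at most $\gamma\norm g$, and again $-\gamma\norm g+2\gamma\norm e=\gamma\norm g$ covers it.

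\textbf{Main obstacle.} The only delicate point is justifying the quadratic bound: $J$ is not globally smooth, only on $\Bcal_{R_T}$. This is resolved by the convexity of the ball together with the containment of both endpoints, both of which come from Lemma~\ref{lem:nsgdm-local-control}. The rest is deterministic, and no expectation is needed.
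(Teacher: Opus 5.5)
Your proof is correct and follows essentially the same route as the paper: the quadratic upper bound from $L_T$-smoothness on the convex ball $\Bcal_{R_T}$ (Lemma~\ref{lem:nsgdm-local-control}), the alignment bound $\inner{\nabla J(V_t)}{M_t/\norm{M_t}}\ge\norm{\nabla J(V_t)}-2\norm{M_t-\nabla J(V_t)}$, and a separate check of the degenerate case $M_t=0$. Your extra treatment of the arbitrary-unit-direction convention is harmless but not needed, since the paper treats $M_t=0$ as a zero update.
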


Combining the preceding lemmas gives the convergence rate and oracle
complexity of Algorithm~\ref{alg:lora-nsgdm}.

\begin{theorem}
\label{thm:stochastic}
Under Assumptions~\ref{ass:smooth}, \ref{ass:lower-bounded},
and~\ref{ass:finite-variance}, for any \(\epsilon\in(0,1]\), it suffices to
run Algorithm~\ref{alg:lora-nsgdm} with
\begin{equation}
  T=\mathcal{O}\!\left(
    \left[
      1+\Delta
      +(1+\norm{V_0})\bigl(\norm{\nabla F(0)}+\sigma\bigr)
      +\rho\bigl(\norm{V_0}^3+(1+\norm{V_0})^2\bigr)
    \right]^8\epsilon^{-8}
  \right).
  \label{eq:stochastic-complexity}
\end{equation}
The output then satisfies \(\E\norm{\nabla J(V_I)}\le\epsilon\), where
\(V_I=[B_I^\top,\,A_I]^\top\).  Since each iteration uses one stochastic
oracle evaluation, its stochastic oracle complexity is also given by
\eqref{eq:stochastic-complexity}.
\end{theorem}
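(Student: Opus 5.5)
Telescoping Lemma~\ref{lem:nsgdm-descent} over \(t=0,\ldots,T-1\), taking total expectation, and combining with Lemma~\ref{lem:nsgdm-tracking} gives the result once the parameter choices \(\alpha=T^{-1/2}\), \(\gamma=T^{-7/8}\) are inserted. Summing \eqref{eq:nsgdm-descent} and using \(J(V_T)\ge F_\star\) (Assumption~\ref{ass:lower-bounded}) together with \(J(V_0)-F_\star=\Delta\), then dividing by \(\gamma T\), I expect
\begin{equation*}
  \E\norm{\nabla J(V_I)}
  =\frac1T\sum_{t<T}\E\norm{\nabla J(V_t)}
  \le\frac{\Delta}{\gamma T}
   +\frac2T\sum_{t<T}\E\norm{M_t-\nabla J(V_t)}
   +\frac{L_T\gamma}{2},
\end{equation*}
where the equality uses that \(I\) is uniform and independent of the trajectory. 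Substituting \eqref{eq:main-tracking} yields
\begin{equation*}
  \E\norm{\nabla J(V_I)}\le
  \frac{\Delta}{\gamma T}+\frac{2\norm{\nabla J(V_0)}}{\alpha T}
  +\frac{2L_T\gamma}{\alpha}+2\sigma R_T\sqrt\alpha+\frac{L_T\gamma}{2}.
\end{equation*}

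Next I control the scales. Since \(\gamma T=T^{1/8}\), we have \(R_T=\norm{V_0}+T^{1/8}\le(1+\norm{V_0})T^{1/8}\), and hence \(L_T\le\norm{\nabla F(0)}+\tfrac32\rho(1+\norm{V_0})^2T^{1/4}\). For \(\norm{\nabla J(V_0)}\), I use \eqref{eq:factor-gradient}, which gives \(\norm{\nabla J(V_0)}\le\norm{\nabla F(B_0A_0)}\,\norm{V_0}\). Then Assumption~\ref{ass:smooth} and Proposition~\ref{prop:factor-geometry} give \(\norm{\nabla F(B_0A_0)}\le\norm{\nabla F(0)}+\tfrac{\rho}{2}\norm{V_0}^2\), so
\begin{equation*}
  \norm{\nabla J(V_0)}\le\norm{V_0}\,\norm{\nabla F(0)}+\tfrac{\rho}{2}\norm{V_0}^3.
\end{equation*}
This is the source of the \(\rho\norm{V_0}^3\) term in \eqref{eq:stochastic-complexity}.

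Plugging in the exponents, each term is bounded by a problem constant times \(T^{-1/8}\):
\begin{itemize}[leftmargin=*]
  \item \(\Delta/(\gamma T)=\Delta T^{-1/8}\);
  \item \(\norm{\nabla J(V_0)}/(\alpha T)=\norm{\nabla J(V_0)}\,T^{-1/2}\);
  \item \(L_T\gamma/\alpha\le\bigl[\norm{\nabla F(0)}+\rho(1+\norm{V_0})^2\bigr]\,T^{1/4-7/8+1/2}\), and the exponent is \(-1/8\);
  \item \(\sigma R_T\sqrt\alpha\le\sigma(1+\norm{V_0})\,T^{1/8-1/4}=\sigma(1+\norm{V_0})\,T^{-1/8}\);
  \item \(L_T\gamma\) is smaller still.
\end{itemize}
Collecting terms gives
\begin{equation*}
  \E\norm{\nabla J(V_I)}\le c\,K\,T^{-1/8},
\end{equation*}
where \(c\) is an absolute constant and \(K\) is the bracket in \eqref{eq:stochastic-complexity}. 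Choosing \(T\ge(cK)^8\epsilon^{-8}\) then gives \(\E\norm{\nabla J(V_I)}\le\epsilon\).

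The main obstacle is the exponent bookkeeping in the drift term \(L_T\gamma/\alpha\). Here \(L_T\) itself grows like \(T^{1/4}\), because \(R_T^2\) grows with the horizon, and this growth is exactly what forces the choice \(\gamma=T^{-7/8}\). One must also check that every constant is absorbed into a single bracket raised to the 8th power, using \(T\ge1\) and the fact that negative powers of \(T\) are at most \(T^{-1/8}\). No independent probabilistic argument is needed beyond the tower property already built into Lemma~\ref{lem:nsgdm-tracking}.
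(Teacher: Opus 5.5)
Your proposal is correct and follows the paper's proof essentially step for step. You telescope Lemma~\ref{lem:nsgdm-descent}, substitute Lemma~\ref{lem:nsgdm-tracking}, and bound $R_T\le(1+\norm{V_0})T^{1/8}$, $L_T=\mathcal{O}(T^{1/4})$ and $\norm{\nabla J(V_0)}\le\norm{V_0}\bigl(\norm{\nabla F(0)}+\tfrac{\rho}{2}\norm{V_0}^2\bigr)$, then check that every term decays at least like $T^{-1/8}$. The paper does the same and only records the explicit constant (roughly $\Delta+2\norm{\nabla J(V_0)}+2\sigma(1+\norm{V_0})+\tfrac52\bigl[\norm{\nabla F(0)}+\tfrac32\rho(1+\norm{V_0})^2\bigr]$) where you write $cK$.
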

 \section{LoRA-STORM with improved stochastic oracle complexity}
\label{sec:variance-reduction}

In this section, we additionally assume that the stochastic oracle satisfies
mean-square smoothness.
\begin{assumption}
\label{ass:mss}
There is a finite \(\ell\ge0\) such that, for every
\(X,Y\in\R^{m\times n}\),
\begin{equation*}
  \E_{\xi}\norm{\widehat H(X;\xi)-\widehat H(Y;\xi)}^2
  \le\ell^2\norm{X-Y}^2.
\end{equation*}
\end{assumption}

The mean-square smoothness condition yields a corresponding local bound for
the factored oracle \(\widehat G(V;\xi)\).
\begin{proposition}
\label{prop:factor-mss}
Suppose Assumptions~\ref{ass:finite-variance} and~\ref{ass:mss} hold.
For any \(U,V\in\R^{(m+n)\times r}\), we have
\begin{equation}
  \E_{\xi}\norm{\widehat G(V;\xi)-\widehat G(U;\xi)}^2
  \le\left[3\ell^2R^4
    +4\norm{\nabla F(0)}^2+2\sigma^2\right]\norm{V-U}^2,
  \label{eq:factor-mss}
\end{equation}
where
\(R=\max\{\norm U,\norm V\}\). 
\end{proposition}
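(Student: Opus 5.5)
We decompose the factored oracle difference blockwise and split each block into a "noise difference" term and a "factor difference" term. Write \(V=[B;A^\top]\), \(U=[B';A'^\top]\), \(X=BA\), \(Y=B'A'\), and abbreviate \(\widehat H_X=\widehat H(X;\xi)\), \(\widehat H_Y=\widehat H(Y;\xi)\). The \(B\)-block of \(\widehat G(V;\xi)-\widehat G(U;\xi)\) is
\[
  \widehat H_XA^\top-\widehat H_YA'^\top
  =(\widehat H_X-\widehat H_Y)A^\top+\widehat H_Y(A-A')^\top ,
\]
and the \(A\)-block is \(B^\top(\widehat H_X-\widehat H_Y)+(B-B')^\top\widehat H_Y\). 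Applying \(\norm{PQ}\le\norm P\norm Q\) to each product and summing the squared blocks, the cross factors combine into \(\norm V^2\) and \(\norm{V-U}^2\). Using \((a+b)^2\le(1+c)a^2+(1+1/c)b^2\), or simply Young's inequality with weights chosen to match the constants, we aim for
\[
  \norm{\widehat G(V;\xi)-\widehat G(U;\xi)}^2
  \le c_1\norm{\widehat H_X-\widehat H_Y}^2\norm V^2
  +c_2\norm{\widehat H_Y}^2\norm{V-U}^2 .
\]

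The first term is handled by Assumption~\ref{ass:mss}: \(\E_\xi\norm{\widehat H_X-\widehat H_Y}^2\le\ell^2\norm{BA-B'A'}^2\). Writing \(BA-B'A'=(B-B')A+B'(A-A')\) gives \(\norm{BA-B'A'}\le\norm{B-B'}\norm A+\norm{B'}\norm{A-A'}\le R\norm{V-U}\) by Cauchy--Schwarz, since \(\norm{B-B'}^2+\norm{A-A'}^2=\norm{V-U}^2\) and \(\norm A,\norm{B'}\le R\). This bounds the first term by \(c_1\ell^2R^4\norm{V-U}^2\).

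The second term is handled by Assumption~\ref{ass:finite-variance} together with smoothness. We have \(\E_\xi\norm{\widehat H_Y}^2=\norm{\nabla F(Y)}^2+\E\norm{\widehat H_Y-\nabla F(Y)}^2\le\norm{\nabla F(Y)}^2+\sigma^2\), and \(\norm{\nabla F(Y)}\le\norm{\nabla F(0)}+\rho\norm Y\). However, the target bound contains no \(\rho\) term, so this route does not directly reproduce it.

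This is the main obstacle: the stated bound must absorb the \(\rho\norm Y\) contribution without an explicit \(\rho\) term. The tool available is mean-square smoothness at \(Y\) versus \(0\). By Jensen, \(\norm{\nabla F(Y)-\nabla F(0)}^2\le\E\norm{\widehat H_Y-\widehat H_0}^2\le\ell^2\norm Y^2\le\ell^2R^4/4\). So instead of invoking \(\rho\), we bound directly
\[
  \E\norm{\widehat H_Y}^2\le 2\,\E\norm{\widehat H_Y-\widehat H_0}^2+2\,\E\norm{\widehat H_0}^2
  \le \tfrac{\ell^2R^4}{2}+2\norm{\nabla F(0)}^2+2\sigma^2 .
\]
Multiplying by a factor of \(2\) from the block splitting yields \(4\norm{\nabla F(0)}^2+2\sigma^2\) plus an \(\ell^2R^4\) piece. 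Together with the first term, this should total at most \(3\ell^2R^4\).

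The remaining work is bookkeeping of the constants: choose the split \((a+b)^2\le2a^2+2b^2\) blockwise, so that \(c_1=c_2=2\) with \(\norm A^2+\norm B^2=\norm V^2\le R^2\). Then verify that \(2\ell^2R^4+\ell^2R^4=3\ell^2R^4\) and that the coefficient on \(\sigma^2\) comes out to \(2\). If the \(\sigma^2\) coefficient becomes \(4\), we would instead bound \(\E\norm{\widehat H_Y}^2\le(\norm{\nabla F(0)}+\ell\norm Y)^2+\sigma^2\) to keep it at one. That handling of \(\sigma\) is the delicate point we would check last.
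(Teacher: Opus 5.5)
Your skeleton is the paper's, but your justification of the key product bound fails. The shared parts are the split of each block into a $(\widehat H_X-\widehat H_Y)A^\top$-type piece and a $\widehat H_Y(A-A')^\top$-type piece, the inequality $\norm{P+Q}^2\le2\norm{P}^2+2\norm{Q}^2$ giving $2\norm{V}^2\norm{\widehat H_X-\widehat H_Y}^2+2\norm{\widehat H_Y}^2\norm{V-U}^2$, and mean-square smoothness on the first piece.

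\textbf{The product bound.} The constant $3\ell^2R^4$ depends on $\norm{BA-B'A'}\le R\norm{V-U}$, and your argument does not establish it. From $BA-B'A'=(B-B')A+B'(A-A')$, Cauchy--Schwarz gives $\norm{B-B'}\norm{A}+\norm{B'}\norm{A-A'}\le(\norm{A}^2+\norm{B'}^2)^{1/2}\norm{V-U}$. Knowing only $\norm{A},\norm{B'}\le R$, this yields $\sqrt2\,R\norm{V-U}$, not $R\norm{V-U}$. The intermediate inequality you wrote is in fact false. Take $m=n=r=1$, $(B,A)=(0,1)$ and $(B',A')=(1,0)$. Then $R=1$ and the middle expression equals $2$, while $R\norm{V-U}=\sqrt2$. (Here $BA-B'A'=0$, so the final bound still holds; the loss is in the asymmetric split.)

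With only $\sqrt2\,R$, the first piece becomes $4\ell^2R^4\norm{V-U}^2$, and you end with $5\ell^2R^4$ instead of $3\ell^2R^4$. The missing idea is the symmetric decomposition the paper uses:
$BA-B'A'=\tfrac12[(B-B')(A+A')+(B+B')(A-A')]$.
Cauchy--Schwarz then gives $\norm{BA-B'A'}\le\tfrac12\norm{V-U}\norm{V+U}\le R\norm{V-U}$, so the first piece is $2\ell^2R^4\norm{V-U}^2$.

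\textbf{The $\sigma^2$ coefficient.} Your primary treatment of $\E\norm{\widehat H_Y}^2$ via $\widehat H_0$ does not give what you claim. Doubling $\tfrac12\ell^2R^4+2\norm{\nabla F(0)}^2+2\sigma^2$ yields $4\sigma^2$, not $2\sigma^2$. So your fallback is required, not optional:
\begin{enumerate}[leftmargin=*]
  \item Write $\E\norm{\widehat H_Y}^2\le\norm{\nabla F(Y)}^2+\sigma^2$.
  \item Use the Jensen consequence you already derived, that $\nabla F$ is $\ell$-Lipschitz, together with $\norm{Y}\le R^2/2$, to get $\E\norm{\widehat H_Y}^2\le(\norm{\nabla F(0)}+\ell R^2/2)^2+\sigma^2$.
  \item Apply $2(x+y)^2\le4x^2+4y^2$ to obtain $4\norm{\nabla F(0)}^2+\ell^2R^4+2\sigma^2$.
\end{enumerate}
This is exactly the paper's route. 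Combined with the corrected first piece, it gives the stated $3\ell^2R^4+4\norm{\nabla F(0)}^2+2\sigma^2$.
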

\begin{algorithm}[H]
  \caption{LoRA-STORM}
  \label{alg:lora-storm}
  \begin{algorithmic}[1]
    \setlength{\itemsep}{1pt}
    \Require \(B_0\in\R^{m\times r}\), \(A_0\in\R^{r\times n}\),
      horizon \(T\ge1\)
    \State \(a\gets T^{-2/3}\) and \(\eta\gets T^{-5/6}\)
    \State Draw samples \(\xi_0,\ldots,\xi_{T-1}\)
    \State \(B_{-1}\gets B_0\), \(A_{-1}\gets A_0\), and
      \(\widehat H_0\gets\widehat H(B_0A_0;\xi_0)\)
    \State \(D_{B,-1}\gets\widehat H_0A_0^\top\) and
      \(D_{A,-1}\gets B_0^\top\widehat H_0\)
    \For{\(t=0,\ldots,T-1\)}
      \State \(\widehat H_t^+\gets\widehat H(B_tA_t;\xi_t)\) and
        \(\widehat H_t^-\gets\widehat H(B_{t-1}A_{t-1};\xi_t)\)
      \State \(D_{B,t}\gets\widehat H_t^+A_t^\top
        +(1-a)\bigl(D_{B,t-1}-\widehat H_t^-A_{t-1}^\top\bigr)\)
      \State \(D_{A,t}\gets B_t^\top\widehat H_t^+
        +(1-a)\bigl(D_{A,t-1}-B_{t-1}^\top\widehat H_t^-\bigr)\)
      \State \(s_t\gets
        \bigl(\norm{D_{B,t}}^2+\norm{D_{A,t}}^2\bigr)^{1/2}\)
      \State \(B_{t+1}\gets B_t-\eta D_{B,t}/s_t\) and
        \(A_{t+1}\gets A_t-\eta D_{A,t}/s_t\)
    \EndFor
    \State Draw \(I\) uniformly from \(\{0,\ldots,T-1\}\)
    \State \Return \((B_I,A_I)\)
  \end{algorithmic}
\end{algorithm}

Proposition~\ref{prop:factor-mss} suggests using variance reduction to improve
the stochastic oracle complexity of LoRA-NSGDM.  
We propose LoRA-STORM in Algorithm~\ref{alg:lora-storm}. Define
\(
  D_t=\begin{bmatrix}D_{B,t}^\top,&D_{A,t}\end{bmatrix}^\top
\).
The algorithm updates \(V_t\) according to
\begin{equation}
  \begin{aligned}
    D_t&=(1-a)D_{t-1}+a\widehat G(V_t;\xi_t)
      +(1-a)\bigl[\widehat G(V_t;\xi_t)
        -\widehat G(V_{t-1};\xi_t)\bigr]\\
    \text{and}\qquad
    V_{t+1}&=V_t-\eta\,D_t/\norm{D_t}.
  \end{aligned}
  \label{eq:storm-update}
\end{equation}

We compare the update rule \eqref{eq:storm-update} with the LoRA-NSGDM update
\eqref{eq:nsgdm-update}.
The main difference is that LoRA-STORM uses an additional correction term
\((1-a)[\widehat G(V_t;\xi_t)-\widehat G(V_{t-1};\xi_t)]\) in the gradient estimator
\citep{cutkosky2019storm}.  
This difference estimates the change in the true gradient between two consecutive iterates, and
Proposition~\ref{prop:factor-mss} controls its second moment.  
Since the normalized step gives \(\norm{V_t-V_{t-1}}=\eta\), the correction remains
controlled throughout the run.  
The following theorem shows that LoRA-STORM finds an \(\epsilon\)-stationary
point with \(\mathcal{O}(\epsilon^{-6})\) stochastic oracle complexity.

\begin{theorem}
\label{thm:variance-reduced}
Under Assumptions~\ref{ass:lower-bounded}, \ref{ass:finite-variance},
and~\ref{ass:mss}, for any
\(\epsilon\in(0,1]\), it suffices to run
Algorithm~\ref{alg:lora-storm} with
\begin{equation}
  T=\mathcal{O}\!\left(
    \left[
      1+\Delta
      +(1+\norm{V_0})\bigl(\norm{\nabla F(0)}+\sigma\bigr)
      +\ell(1+\norm{V_0})^2
    \right]^6\epsilon^{-6}
  \right).
  \label{eq:variance-reduced-complexity}
\end{equation}
The output then satisfies \(\E\norm{\nabla J(V_I)}\le\epsilon\), where
\(V_I=[B_I^\top,\,A_I]^\top\).  
Algorithm~\ref{alg:lora-storm} uses \(1+2(T-1)=2T-1\) stochastic oracle
evaluations, so its stochastic oracle complexity is also given by
\eqref{eq:variance-reduced-complexity}.
\end{theorem}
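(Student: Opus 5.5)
We mirror the LoRA-NSGDM analysis, replacing the momentum tracking bound (Lemma~\ref{lem:nsgdm-tracking}) with a STORM-type recursive variance bound.

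\textbf{Trajectory control and local smoothness.} Every step has length exactly $\eta$, so $\norm{V_t}\le R_T:=\norm{V_0}+T\eta$ for $0\le t\le T$. Note that Assumption~\ref{ass:smooth} is not listed in the theorem. However, mean-square smoothness together with unbiasedness gives, by Jensen,
\[
\norm{\nabla F(X)-\nabla F(Y)}\le\ell\norm{X-Y},
\]
so $F$ is $\ell$-smooth. The argument of Lemma~\ref{lem:nsgdm-local-control}, with $\rho$ replaced by $\ell$, then shows that $J$ is $L_T$-smooth on $\Bcal_{R_T}$, where
\[
L_T=\norm{\nabla F(0)}+\tfrac32\ell R_T^2 .
\]
The same argument also gives the noise bound $\sigma^2R_T^2$. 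The proof of Lemma~\ref{lem:nsgdm-descent} uses only normalization and local smoothness. It therefore gives, with $\epsilon_t:=D_t-\nabla J(V_t)$,
\[
J(V_{t+1})\le J(V_t)-\eta\norm{\nabla J(V_t)}+2\eta\norm{\epsilon_t}+\tfrac{L_T\eta^2}{2}.
\]
Summing this over $t<T$ and using $J\ge F_\star$ yields
\[
\frac1T\sum_{t<T}\E\norm{\nabla J(V_t)}\le \frac{\Delta}{\eta T}+\frac2T\sum_{t<T}\E\norm{\epsilon_t}+\frac{L_T\eta}{2}.
\]

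\textbf{Error recursion (main step).} From \eqref{eq:storm-update} we write
\[
\epsilon_t=(1-a)\epsilon_{t-1}+a\zeta_t+(1-a)\psi_t,
\]
where
\[
\zeta_t=\widehat G(V_t;\xi_t)-\nabla J(V_t),\qquad
\psi_t=\widehat G(V_t;\xi_t)-\widehat G(V_{t-1};\xi_t)-\bigl(\nabla J(V_t)-\nabla J(V_{t-1})\bigr).
\]
Conditioned on the past, $\zeta_t$ and $\psi_t$ are mean zero, because $V_t$ and $V_{t-1}$ are determined before $\xi_t$ is drawn. Hence, writing $(1-a)^2\le 1-a$,
\[
\E\norm{\epsilon_t}^2\le(1-a)\E\norm{\epsilon_{t-1}}^2+2a^2\sigma^2R_T^2+2K_T\eta^2,
\]
where $K_T=3\ell^2R_T^4+4\norm{\nabla F(0)}^2+2\sigma^2$ comes from Proposition~\ref{prop:factor-mss} with $\norm{V_t-V_{t-1}}\le\eta$. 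The initial error satisfies $\E\norm{\epsilon_0}^2\le\sigma^2R_T^2$, since $D_{-1}$ is one sample at $V_0$ and the correction term vanishes at $t=0$. Unrolling the recursion and applying Jensen gives
\[
\E\norm{\epsilon_t}\le(1-a)^{t/2}\sigma R_T+\sqrt{2a}\,\sigma R_T+\sqrt{2K_T/a}\,\eta.
\]
Averaging over $t$, the transient contributes $\sigma R_T/(aT)$ up to constants, since $\sum_t(1-a)^{t/2}=\mathcal{O}(1/a)$. This step is the main obstacle. One must check that the mean-square smoothness constant in Proposition~\ref{prop:factor-mss} grows like $R_T^4$, which is only polynomial, and that the martingale structure survives the data-dependent $V_{t-1}$.

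\textbf{Parameter bookkeeping.} With $a=T^{-2/3}$ and $\eta=T^{-5/6}$ we have $T\eta=T^{1/6}$, so $R_T\le(1+\norm{V_0})T^{1/6}$. The terms then scale as follows:
\[
\frac{\Delta}{\eta T}=\Delta T^{-1/6},\qquad
L_T\eta\lesssim\bigl(\norm{\nabla F(0)}+\ell(1+\norm{V_0})^2T^{1/3}\bigr)T^{-5/6},
\]
\[
\sqrt a\,\sigma R_T\lesssim\sigma(1+\norm{V_0})T^{-1/6},\qquad
\sqrt{K_T/a}\,\eta\lesssim\bigl(\ell(1+\norm{V_0})^2T^{1/3}+\norm{\nabla F(0)}+\sigma\bigr)T^{1/3-5/6},
\]
\[
\frac{\sigma R_T}{aT}\lesssim\sigma(1+\norm{V_0})T^{-1/6}.
\]
Every term is at most $C\,T^{-1/6}$, where $C$ is the bracket in \eqref{eq:variance-reduced-complexity}. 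Choosing $T=\mathcal{O}(C^6\epsilon^{-6})$ therefore gives $\E\norm{\nabla J(V_I)}=\frac1T\sum_t\E\norm{\nabla J(V_t)}\le\epsilon$. Finally, we count $1+2(T-1)$ oracle calls: at $t=0$ we have $V_{-1}=V_0$, so sample $\xi_0$ is evaluated only once.
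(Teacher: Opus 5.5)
Your proposal is correct and follows essentially the same route as the paper. Both arguments derive $\ell$-smoothness of $F$ from mean-square smoothness via Jensen, use local $L_T$-smoothness on $\Bcal_{R_T}$ and the normalized-step descent inequality, run the STORM recursion $\epsilon_t=(1-a)\epsilon_{t-1}+a\zeta_t+(1-a)\psi_t$ with vanishing cross terms, unroll the second-moment bound and apply Jensen, and set $a=T^{-2/3}$, $\eta=T^{-5/6}$ to get the $T^{-1/6}$ rate. Your looser steps only change absolute constants: using $(1-a)^2\le 1-a$, bounding the initial error by $\sigma R_T$ rather than $\sigma\norm{V_0}$, and quoting the final constant of Proposition~\ref{prop:factor-mss} instead of its intermediate form.
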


\section{Experiments}
\label{sec:experiments}

We conduct experiments to verify the practical implications of our theory. In
particular, we compare the convergence of \algname{} and \vralgname{} with
LoRA-GD using the adaptive stepsizes of \citet{mu2026loraGD}. We consider three
tasks. The first trains logistic regression on pretrained image representations
on CIFAR-10. The second trains a LoRA-parameterized ResNet-18 on CIFAR-10. The
third fine-tunes TinyLlama-1.1B on Alpaca.
We evaluate \vralgname{} only in the logistic-regression experiment. Although
\vralgname{} offers improved stochastic-oracle complexity in theory, each
iteration requires two stochastic-oracle evaluations. This additional cost
makes \vralgname{} less attractive for the larger-scale ResNet-18 and TinyLlama
experiments, where we compare \algname{} with the LoRA-GD baselines.

Unless stated otherwise, we follow the experimental protocol of
\citet{mu2026loraGD}, which includes the datasets, architectures, LoRA ranks,
factor initializations, batch sizes, and training horizons. The baselines are
LoRA-GD with their \(\eta^{\mathrm{adapt}}\),
\(\eta^{\mathrm{adapt2}}\), and \(\eta^{\mathrm{norm}}\) stepsizes. Following \citet{mu2026loraGD}, we omit
\(\eta^{\mathrm{adapt}}\) from the ResNet-18 and TinyLlama experiments because
computing the required gradient with respect to the matrix \(BA\) incurs substantial memory
overhead for large models. Within each experiment, all methods use the same data split, initial predictor, and
minibatch order. Appendix~\ref{app:experimental-details} gives the complete
experimental settings and hyperparameters.

\subsection{Logistic regression}

\begin{figure}[!ht]
  \centering
  \includegraphics[width=0.8\textwidth]{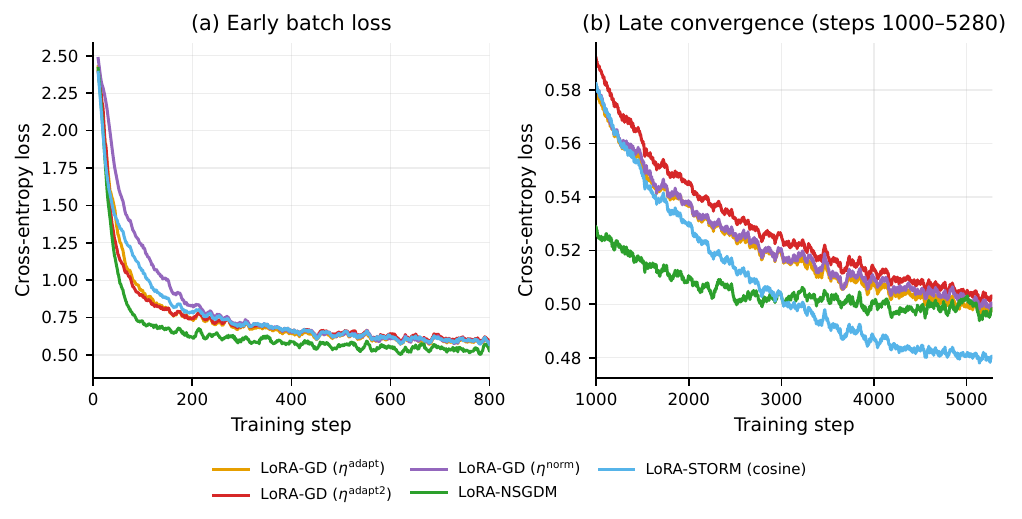}
  \caption{Training a rank-4 LoRA logistic-regression classifier on CIFAR-10
  representations.}
  \label{fig:feature-convergence}
\end{figure}

We consider logistic regression on CIFAR-10 representations extracted by
an ImageNet-pretrained \cite{deng2009imagenet} ResNet-18 model \cite{he2016deep}. Figure~\ref{fig:feature-convergence} presents
the convergence comparison. Each CIFAR-10 image is passed once through the pretrained network. We use the resulting 512-dimensional penultimate-layer representation as the input to the linear classifier. These representations are computed before training and
are shared by all methods. The classifier weight is \(W_0+BA\). The base matrix \(W_0\) is fixed, and
only the rank-4 factors \(A\) and \(B\) are trained. We initialize \(A\) with PyTorch's Kaiming-uniform rule, and set \(B=0\).
Training
uses cross-entropy, shuffled minibatches of size 512, no weight decay, and 88
steps per epoch. Panels (a) and (b) of Figure~\ref{fig:feature-convergence} show
the first 60 epochs. In Figure~\ref{fig:feature-convergence}a and \ref{fig:feature-convergence}b, \algname{} exhibits faster convergence than the LoRA-GD algorithms. Over the final 2,000 updates shown in Figure~\ref{fig:feature-convergence}(b), \vralgname{} attains the lowest mean loss.


\subsection{ResNet-18 training}
\begin{figure}[!ht]
  \centering
  \includegraphics[width=0.8\textwidth]{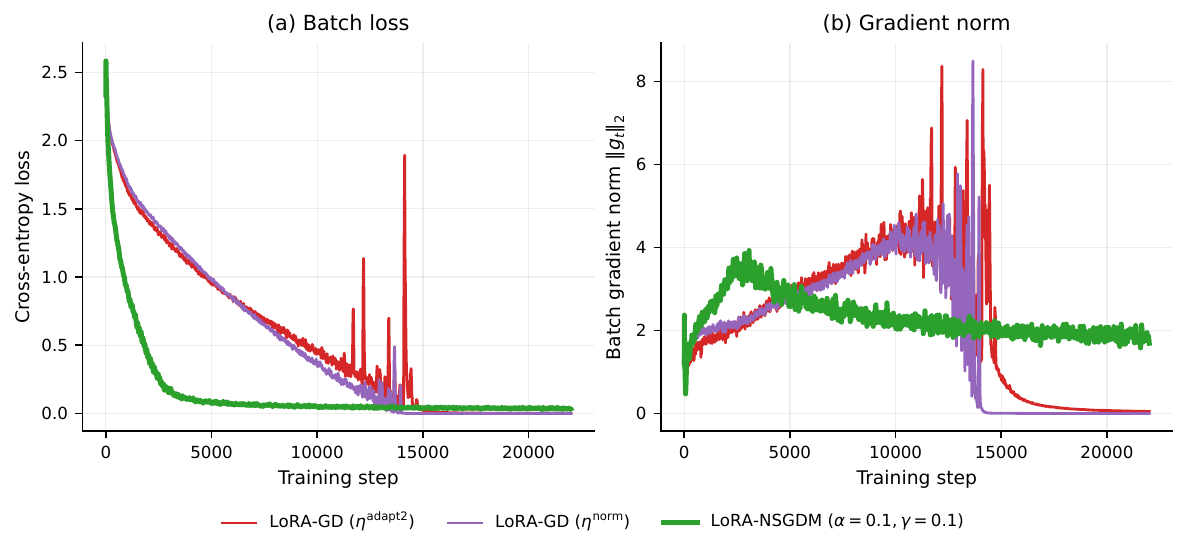}
  \caption{Training ResNet-18 on CIFAR-10 with rank-20 LoRA.}
  \label{fig:resnet-convergence}
\end{figure}
We train a ResNet-18 directly on CIFAR-10. Rank-20 LoRA factors are applied to
the convolution layers. The output layer is trained without a rank constraint. Batch normalization layers are disabled following the settings in \citet{mu2026loraGD}. We use
minibatches of size 512, cross-entropy, and no weight decay. Each method is
trained for 22,000 steps. Figure~\ref{fig:resnet-convergence}
shows the results over the first 250 epochs. \algname{} converges significantly faster and more steadily than LoRA-GD algorithms.

\subsection{TinyLlama-1.1B fine-tuning}

\begin{figure}[!ht]
  \centering
  \includegraphics[width=0.8\textwidth]{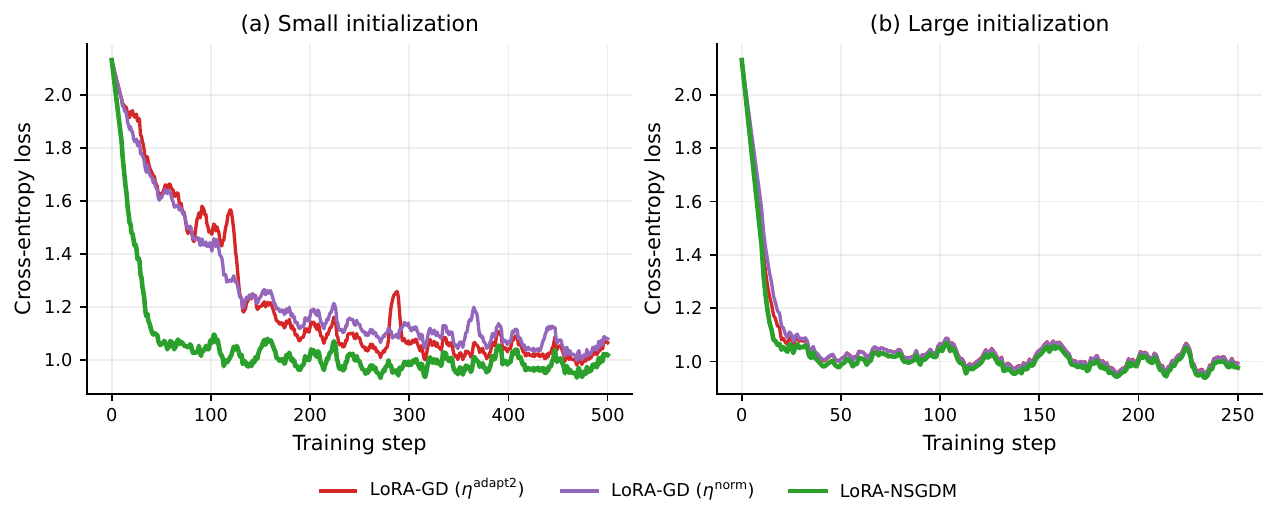}
  \caption{Fine-tuning TinyLlama-1.1B on Alpaca with rank-32 LoRA under
  (a) small-factor and (b) large-factor initialization.}
  \label{fig:tinyllama-initialization}
\end{figure}
We evaluate the methods on TinyLlama-1.1B instruction fine-tuning.
Figure~\ref{fig:tinyllama-initialization} compares small-factor and
large-factor initialization. We fine-tune TinyLlama-1.1B-Chat-v1.0 \citep{zhang2024tinyllama} on Alpaca
\citep{taori2023alpaca}. Each example contains an instruction, an optional
input, and a response. Sequences are padded or truncated to 512 tokens, and
padding tokens are excluded from the causal-language-model loss. We reserve
2,000 examples for validation. Each optimizer update uses one minibatch of 16
examples, with no gradient accumulation. Rank-32 LoRA factors are inserted into the query, key, value, and output
projections of every attention block. The base-model weights and biases remain
fixed. In both initialization regimes, we set \(B=0\) and sample
\(A\sim\mathcal N(0,\sigma^2)\). The small-factor initialization uses
\(\sigma=10^{-3}\), whereas the large-factor initialization uses
\(\sigma=1/r=0.03125\). The corresponding training horizons are 500 and 250
optimizer updates, respectively. 
With the small-factor initialization shown in
Figure~\ref{fig:tinyllama-initialization}(a), \algname{} converges faster than
the LoRA-GD algorithms.

 \section{Conclusion}
\label{sec:conclusion}

We established finite-time convergence guarantees for simultaneous two-factor
LoRA optimization.  
For deterministic LoRA-GD, a refined trajectory analysis
improves the previously exponential sufficient complexity to
\(\mathcal{O}(\epsilon^{-4})\) full-gradient evaluations for finding \(V\)
with \(\norm{\nabla J(V)}\le\epsilon\).  
In the stochastic setting, we showed that plain LoRA-SGD
can fail under finite variance alone and introduced \algname{}, which achieve \(\mathcal{O}(\epsilon^{-8})\) stochastic
oracle complexity for \(\E\norm{\nabla J(V)}\le\epsilon\).  
When the stochastic oracle additionally satisfies mean-square smoothness, \vralgname{} evaluates
each fresh sample at two consecutive iterates and improves the stochastic
oracle complexity to \(\mathcal{O}(\epsilon^{-6})\).  
Together, these results give polynomial deterministic and stochastic convergence guarantees for
simultaneous LoRA factor updates under the minimal oracle assumptions.
 

\begingroup
\sloppy
\bibliographystyle{plainnat}
\bibliography{references}

@inproceedings{aghajanyan2021intrinsic,
  author       = {Aghajanyan, Armen and Gupta, Sonal and Zettlemoyer, Luke},
  title        = {Intrinsic Dimensionality Explains the Effectiveness of
                  Language Model Fine-Tuning},
  booktitle    = {Proceedings of the 59th Annual Meeting of the Association
                  for Computational Linguistics and the 11th International
                  Joint Conference on Natural Language Processing (Volume 1:
                  Long Papers)},
  pages        = {7319--7328},
  year         = {2021},
  publisher    = {Association for Computational Linguistics},
  doi          = {10.18653/v1/2021.acl-long.568},
  url          = {https://aclanthology.org/2021.acl-long.568/}
}

@inproceedings{hu2022lora,
  title        = {{LoRA}: Low-Rank Adaptation of Large Language Models},
  author       = {Hu, Edward J. and Shen, Yelong and Wallis, Phillip and
                  Allen-Zhu, Zeyuan and Li, Yuanzhi and Wang, Shean and
                  Wang, Lu and Chen, Weizhu},
  booktitle    = {International Conference on Learning Representations},
  year         = {2022},
  url          = {https://openreview.net/forum?id=nZeVKeeFYf9}
}

@inproceedings{sun2024federated,
  author       = {Sun, Youbang and Li, Zitao and Li, Yaliang and Ding, Bolin},
  title        = {Improving {LoRA} in Privacy-Preserving Federated Learning},
  booktitle    = {International Conference on Learning Representations},
  year         = {2024},
  url          = {https://openreview.net/forum?id=NLPzL6HWNl}
}

@inproceedings{ghiasvand2025decentralized,
  author       = {Ghiasvand, Sajjad and Alizadeh, Mahnoosh and
                  Pedarsani, Ramtin},
  title        = {Decentralized Low-Rank Fine-Tuning of Large Language Models},
  booktitle    = {Proceedings of the 1st Workshop for Research on Agent
                  Language Models},
  pages        = {334--345},
  year         = {2025},
  publisher    = {Association for Computational Linguistics},
  doi          = {10.18653/v1/2025.realm-1.24},
  url          = {https://aclanthology.org/2025.realm-1.24/}
}

@article{malinovsky2024rac,
  author       = {Malinovsky, Grigory and Michieli, Umberto and
                  Hammoud, Hasan Abed Al Kader and Ceritli, Taha and
                  Elesedy, Hayder and Ozay, Mete and
                  Richt{\'a}rik, Peter},
  title        = {Randomized Asymmetric Chain of {LoRA}: The First Meaningful
                  Theoretical Framework for Low-Rank Adaptation},
  journal      = {arXiv preprint arXiv:2410.08305},
  year         = {2024},
  url          = {https://arxiv.org/abs/2410.08305},
  doi          = {10.48550/arXiv.2410.08305}
}

@inproceedings{dettmers2023qlora,
  author       = {Dettmers, Tim and Pagnoni, Artidoro and Holtzman, Ari and
                  Zettlemoyer, Luke},
  title        = {{QLoRA}: Efficient Finetuning of Quantized {LLM}s},
  booktitle    = {Advances in Neural Information Processing Systems},
  volume       = {36},
  pages        = {10088--10115},
  year         = {2023},
  doi          = {10.52202/075280-0441},
  url          = {https://proceedings.neurips.cc/paper_files/paper/2023/hash/1feb87871436031bdc0f2beaa62a049b-Abstract-Conference.html}
}

@inproceedings{hayou2024loraplus,
  author       = {Hayou, Soufiane and Ghosh, Nikhil and Yu, Bin},
  title        = {{LoRA}+: Efficient Low Rank Adaptation of Large Models},
  booktitle    = {Proceedings of the 41st International Conference on Machine
                  Learning},
  series       = {Proceedings of Machine Learning Research},
  volume       = {235},
  pages        = {17783--17806},
  year         = {2024},
  url          = {https://proceedings.mlr.press/v235/hayou24a.html}
}

@inproceedings{meng2024pissa,
  author       = {Meng, Fanxu and Wang, Zhaohui and Zhang, Muhan},
  title        = {{PiSSA}: Principal Singular Values and Singular Vectors
                  Adaptation of Large Language Models},
  booktitle    = {Advances in Neural Information Processing Systems},
  volume       = {37},
  pages        = {121038--121072},
  year         = {2024},
  doi          = {10.52202/079017-3846},
  url          = {https://proceedings.neurips.cc/paper_files/paper/2024/hash/db36f4d603cc9e3a2a5e10b93e6428f2-Abstract-Conference.html}
}

@inproceedings{wang2024loraga,
  author       = {Wang, Shaowen and Yu, Linxi and Li, Jian},
  title        = {{LoRA-GA}: Low-Rank Adaptation with Gradient Approximation},
  booktitle    = {Advances in Neural Information Processing Systems},
  volume       = {37},
  pages        = {54905--54931},
  year         = {2024},
  doi          = {10.52202/079017-1741},
  url          = {https://proceedings.neurips.cc/paper_files/paper/2024/hash/62c4718cc334f6a0a62fb81c4a2095a1-Abstract-Conference.html}
}

@inproceedings{jang2024lorantk,
  author       = {Jang, Uijeong and Lee, Jason D. and Ryu, Ernest K.},
  title        = {{LoRA} Training in the {NTK} Regime has No Spurious Local
                  Minima},
  booktitle    = {Proceedings of the 41st International Conference on Machine
                  Learning},
  series       = {Proceedings of Machine Learning Research},
  volume       = {235},
  pages        = {21306--21328},
  year         = {2024},
  url          = {https://proceedings.mlr.press/v235/jang24d.html}
}

@article{biderman2024loralearns,
  author       = {Biderman, Dan and Portes, Jacob and
                  Gonzalez Ortiz, Jose Javier and Paul, Mansheej and
                  Greengard, Philip and Jennings, Connor and King, Daniel and
                  Havens, Sam and Chiley, Vitaliy and Frankle, Jonathan and
                  Blakeney, Cody and Cunningham, John P.},
  title        = {{LoRA} Learns Less and Forgets Less},
  journal      = {Transactions on Machine Learning Research},
  year         = {2024},
  issn         = {2835-8856},
  url          = {https://openreview.net/forum?id=aloEru2qCG}
}

@inproceedings{shuttleworth2025illusion,
  author       = {Shuttleworth, Reece and Andreas, Jacob and
                  Torralba, Antonio and Sharma, Pratyusha},
  title        = {{LoRA} vs Full Fine-Tuning: An Illusion of Equivalence},
  booktitle    = {Advances in Neural Information Processing Systems},
  volume       = {38},
  year         = {2025},
  url          = {https://papers.nips.cc/paper_files/paper/2025/hash/ff541950d1e885af90f523571564a401-Abstract-Conference.html}
}

@inproceedings{zeng2024expressive,
  author       = {Zeng, Yuchen and Lee, Kangwook},
  title        = {The Expressive Power of Low-Rank Adaptation},
  booktitle    = {International Conference on Learning Representations},
  year         = {2024},
  url          = {https://openreview.net/forum?id=likXVjmh3E}
}

@inproceedings{kim2025loraconverges,
  author       = {Kim, Junsu and Kim, Jaeyeon and Ryu, Ernest K.},
  title        = {{LoRA} Training Provably Converges to a Low-Rank Global
                  Minimum or It Fails Loudly (But It Probably Won't Fail)},
  booktitle    = {Proceedings of the 42nd International Conference on Machine
                  Learning},
  series       = {Proceedings of Machine Learning Research},
  volume       = {267},
  pages        = {30224--30247},
  year         = {2025},
  publisher    = {PMLR},
  url          = {https://proceedings.mlr.press/v267/kim25n.html}
}

@inproceedings{xu2025loradynamics,
  author       = {Xu, Ziqing and Min, Hancheng and MacDonald, Lachlan Ewen and
                  Luo, Jinqi and Tarmoun, Salma and Mallada, Enrique and
                  Vidal, Rene},
  title        = {Understanding the Learning Dynamics of {LoRA}: A Gradient
                  Flow Perspective on Low-Rank Adaptation in Matrix
                  Factorization},
  booktitle    = {Proceedings of the 28th International Conference on
                  Artificial Intelligence and Statistics},
  series       = {Proceedings of Machine Learning Research},
  volume       = {258},
  pages        = {4636--4644},
  year         = {2025},
  publisher    = {PMLR},
  url          = {https://proceedings.mlr.press/v258/xu25h.html}
}

@inproceedings{zhang2024riemannian,
  author       = {Zhang, Fangzhao and Pilanci, Mert},
  title        = {Riemannian Preconditioned {LoRA} for Fine-Tuning Foundation
                  Models},
  booktitle    = {Proceedings of the 41st International Conference on Machine
                  Learning},
  series       = {Proceedings of Machine Learning Research},
  volume       = {235},
  pages        = {59641--59669},
  year         = {2024},
  publisher    = {PMLR},
  url          = {https://proceedings.mlr.press/v235/zhang24ax.html}
}

@inproceedings{jiang2024unified,
  author       = {Jiang, Zhanhong and Saadati, Nastaran and Balu, Aditya and
                  Pham, Minh and Waite, Joshua R. and Saleem, Nasla and
                  Hegde, Chinmay and Sarkar, Soumik},
  title        = {A Unified Convergence Theory for Large Language Model
                  Efficient Fine-Tuning},
  booktitle    = {{OPT} 2024: Optimization for Machine Learning},
  year         = {2024},
  url          = {https://openreview.net/forum?id=f0lq26eITJ}
}

@inproceedings{ding2026dynamics,
  author       = {Ding, Shu and Peng, Yang and Zhou, Hangan and Lu, Xinyu and
                  Chen, Shangwei and Huang, Junhua and Yuan, Mingxuan and
                  Wang, Wei},
  title        = {Towards Understanding the Dynamics of Low-Rank Adaptation},
  booktitle    = {International Conference on Machine Learning},
  year         = {2026},
  url          = {https://openreview.net/forum?id=dPvcpbbLSs}
}

@inproceedings{mu2026loraGD,
  author       = {Mu, Siqiao and Klabjan, Diego},
  title        = {On the Convergence Rate of {LoRA} Gradient Descent},
  booktitle    = {International Conference on Machine Learning},
  year         = {2026},
  url          = {https://arxiv.org/abs/2512.18248},
  doi          = {10.48550/arXiv.2512.18248}
}

@article{sokolov2025bernoulli,
  author       = {Sokolov, Igor and Sadiev, Abdurakhmon and Demidovich, Yury and
                  Al-Qahtani, Fawaz S. and Richt{\'a}rik, Peter},
  title        = {Bernoulli-{LoRA}: A Theoretical Framework for Randomized
                  Low-Rank Adaptation},
  journal      = {arXiv preprint arXiv:2508.03820},
  year         = {2025},
  url          = {https://arxiv.org/abs/2508.03820},
  doi          = {10.48550/arXiv.2508.03820}
}

@inproceedings{cutkosky2019storm,
  author       = {Cutkosky, Ashok and Orabona, Francesco},
  title        = {Momentum-Based Variance Reduction in Non-Convex {SGD}},
  booktitle    = {Advances in Neural Information Processing Systems},
  volume       = {32},
  pages        = {15236--15245},
  year         = {2019},
  url          = {https://papers.nips.cc/paper_files/paper/2019/hash/b8002139cdde66b87638f7f91d169d96-Abstract.html}
}

@article{zhang2024tinyllama,
  author       = {Zhang, Peiyuan and Zeng, Guangtao and Wang, Tianduo and
                  Lu, Wei},
  title        = {{TinyLlama}: An Open-Source Small Language Model},
  journal      = {arXiv preprint arXiv:2401.02385},
  year         = {2024},
  url          = {https://arxiv.org/abs/2401.02385},
  doi          = {10.48550/arXiv.2401.02385}
}

@misc{taori2023alpaca,
  author       = {Taori, Rohan and Gulrajani, Ishaan and Zhang, Tianyi and
                  Dubois, Yann and Li, Xuechen and Guestrin, Carlos and
                  Liang, Percy and Hashimoto, Tatsunori B.},
  title        = {Stanford Alpaca: An Instruction-Following {LLaMA} Model},
  year         = {2023},
  howpublished = {\url{https://github.com/tatsu-lab/stanford_alpaca}}
}

@inproceedings{deng2009imagenet,
  author       = {Deng, Jia and Dong, Wei and Socher, Richard and Li, Li-Jia and
                  Li, Kai and Fei-Fei, Li},
  title        = {{ImageNet}: A Large-Scale Hierarchical Image Database},
  booktitle    = {IEEE Conference on Computer Vision and Pattern Recognition},
  pages        = {248--255},
  year         = {2009},
  doi          = {10.1109/CVPR.2009.5206848}
}

@inproceedings{he2016deep,
  author    = {He, Kaiming and Zhang, Xiangyu and Ren, Shaoqing and Sun, Jian},
  title     = {Deep Residual Learning for Image Recognition},
  booktitle = {Proceedings of the IEEE Conference on Computer Vision and
               Pattern Recognition},
  pages     = {770--778},
  year      = {2016},
  doi       = {10.1109/CVPR.2016.90}
}
\endgroup

\clearpage
\appendix

\section{Auxiliary Lemmas}
\label{app:auxiliary-lemmas}
\begin{lemma}[{\normalfont\citealp[Lemma~3.3]{mu2026loraGD}}]
\label{lem:appendix-modified-descent}
Under Assumption~\ref{ass:smooth}, for every
\(V=[B^\top,\,A]^\top\in\R^{(m+n)\times r}\) and
\(U\in\R^{(m+n)\times r}\), with \(H=\nabla F(BA)\),
\begin{align*}
  J(V+U)\le{}&J(V)+\inner{\nabla J(V)}{U}
  +\sqrt2\,\rho\norm{U}^2\norm{V}^2
  +\sqrt2\,\rho\norm{U}^3\norm{V}\\
  &+\frac{\sqrt2\,\rho}{4}\norm{U}^4
  +\norm{H}\norm{U}^2.
\end{align*}
\end{lemma}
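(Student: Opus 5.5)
We will expand $J(V+U)=F\bigl((B+U_B)(A+U_A)\bigr)$ around the point $BA$ and apply the standard quadratic upper bound for the $\rho$-smooth function $F$. Write $U=[U_B^\top,\,U_A]^\top$ with $U_B\in\R^{m\times r}$ and $U_A\in\R^{r\times n}$, so that $\norm U^2=\norm{U_B}^2+\norm{U_A}^2$. The perturbed product is
\[
  (B+U_B)(A+U_A)=BA+\Delta_1+\Delta_2,
  \qquad
  \Delta_1:=U_BA+BU_A,\quad \Delta_2:=U_BU_A .
\]
Assumption~\ref{ass:smooth} gives the descent inequality
\[
  F(BA+\Delta)\le F(BA)+\inner{H}{\Delta}+\frac{\rho}{2}\norm{\Delta}^2,
  \qquad \Delta=\Delta_1+\Delta_2 .
\]

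We treat the linear term first. By the trace identities $\inner{H}{U_BA}=\inner{HA^\top}{U_B}$ and $\inner{H}{BU_A}=\inner{B^\top H}{U_A}$, together with \eqref{eq:factor-gradient}, we get $\inner{H}{\Delta_1}=\inner{\nabla J(V)}{U}$. This is exactly the first-order term in the claim. For the bilinear remainder we use Cauchy--Schwarz, submultiplicativity of the Frobenius norm, and AM--GM:
\[
  \inner{H}{\Delta_2}\le\norm H\,\norm{U_B}\,\norm{U_A}\le\tfrac12\norm H\,\norm U^2\le\norm H\,\norm U^2 .
\]

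Next we bound the quadratic term. Submultiplicativity and Cauchy--Schwarz in $\R^2$ give
\[
  \norm{\Delta_1}\le\norm{U_B}\norm A+\norm B\norm{U_A}\le\norm U\,\norm V,
\]
where we use $\norm V^2=\norm B^2+\norm A^2$ from Proposition~\ref{prop:factor-geometry}. Similarly, $\norm{\Delta_2}\le\norm{U_B}\norm{U_A}\le\tfrac12\norm U^2$. Therefore
\[
  \norm{\Delta}^2\le\Bigl(\norm U\norm V+\tfrac12\norm U^2\Bigr)^2
  =\norm U^2\norm V^2+\norm U^3\norm V+\tfrac14\norm U^4 .
\]
Multiplying by $\rho/2$ produces the three polynomial terms with coefficients $\rho/2$, $\rho/2$ and $\rho/8$. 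Each of these is at most the stated coefficient $\sqrt2\rho$, $\sqrt2\rho$ and $\sqrt2\rho/4$ respectively. Adding the linear and bilinear contributions then yields the claimed inequality.

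No step here is a genuine obstacle. The only point needing care is the identification $\inner{H}{\Delta_1}=\inner{\nabla J(V)}{U}$, which requires matching transposes in the stacked representation $V=[B^\top,A]^\top$ used by the paper. The stated constants are looser than what this argument gives, so there is slack throughout.
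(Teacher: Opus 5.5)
Your proof is correct and complete. The paper gives no derivation of its own: it imports this lemma verbatim from \citet[Lemma~3.3]{mu2026loraGD}. Your argument is therefore a self-contained replacement rather than a reconstruction of a proof in the text.

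Working in $X$-space is a good choice. You apply the global quadratic upper bound for the $\rho$-smooth $F$ at $X=BA$ with the exact perturbation $\Delta=(U_BA+BU_A)+U_BU_A$. This way the fact that $\nabla J$ is not globally Lipschitz in the factors never has to be confronted.

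The trace identities correctly give $\inner{H}{U_BA+BU_A}=\inner{\nabla J(V)}{U}$ under the stacking $U=[U_B^\top,\,U_A]^\top$. The estimate $\norm{\Delta}\le\norm U\norm V+\tfrac12\norm U^2$ follows from submultiplicativity and Cauchy--Schwarz, as you say. In fact you prove the sharper bound
\[
J(V+U)\le J(V)+\inner{\nabla J(V)}{U}+\tfrac{\rho}{2}\norm U^2\norm V^2+\tfrac{\rho}{2}\norm U^3\norm V+\tfrac{\rho}{8}\norm U^4+\tfrac12\norm H\norm U^2,
\]
which implies the stated inequality because $\rho>0$.

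By citing rather than re-deriving, the paper stays consistent with the companion one-step lemma (Lemma~\ref{lem:appendix-one-step}) and the stepsize $(4\sqrt2\,\rho d_t)^{-1}$ of Algorithm~\ref{alg:lora-gd}. Both are calibrated to the looser $\sqrt2\,\rho$ constants. Your tighter constants would change only absolute constants downstream, not any complexity order.
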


\begin{lemma}[{\normalfont\citealp[Lemma~3.4]{mu2026loraGD}}]
\label{lem:appendix-one-step}
Under Assumption~\ref{ass:smooth}, let
\(V_+=V-\eta\nabla J(V)\), where, with the convention
\(1/0=+\infty\),
\[
  \eta=\min\left\{
  \frac{1}{4\sqrt2\,\rho(\norm V^2+\norm{\nabla F(BA)})},1
  \right\}.
\]
Then
\begin{equation*}
  J(V_+)\le J(V)-\frac{\eta}{4}\norm{\nabla J(V)}^2.
\end{equation*}
\end{lemma}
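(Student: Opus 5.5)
The approach is to apply the modified descent inequality of Lemma~\ref{lem:appendix-modified-descent} with the step \(U=-\eta\nabla J(V)\), and then show that every positive remainder term is at most a fixed fraction of \(\eta\norm{\nabla J(V)}^2\).

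\textbf{Setup.} Write \(g:=\norm{\nabla J(V)}\), \(H:=\nabla F(BA)\) and \(D:=\norm V^2+\norm H\).

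First dispose of the degenerate case \(D=0\). Then \(H=0\), so \(\nabla J(V)=0\) by \eqref{eq:factor-gradient}. Hence \(V_+=V\) and the claim is trivial.

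Assume from now on that \(D>0\). Whichever branch of the minimum is active, we always have \(\eta\le(4\sqrt2\,\rho D)^{-1}\).

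\textbf{Bounding the gradient.} From \eqref{eq:factor-gradient} and submultiplicativity of the Frobenius norm,
\[
  g^2=\norm{HA^\top}^2+\norm{B^\top H}^2\le\norm H^2\bigl(\norm A^2+\norm B^2\bigr)=\norm H^2\norm V^2 .
\]
So \(g\le\norm H\,\norm V\). This is what lets us control the cubic and quartic terms.

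\textbf{Reducing to a scalar inequality.} Substituting \(U=-\eta\nabla J(V)\) into Lemma~\ref{lem:appendix-modified-descent} gives
\[
  J(V_+)\le J(V)-\eta g^2\Bigl[1-\sqrt2\rho\eta\norm V^2-\eta\norm H-\sqrt2\rho\eta^2 g\norm V-\tfrac{\sqrt2\rho}{4}\eta^3g^2\Bigr].
\]
It therefore suffices to show that the four subtracted terms inside the bracket sum to at most \(3/4\). We bound them one at a time, using \(\rho\ge1\) and \(\eta\le(4\sqrt2\rho D)^{-1}\).

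\begin{itemize}[leftmargin=*]
  \item \emph{First term.} Since \(\norm V^2\le D\), we get \(\sqrt2\rho\eta\norm V^2\le 1/4\).
  \item \emph{Second term.} Since \(\norm H\le D\) and \(\rho\ge1\), we get \(\eta\norm H\le 1/(4\sqrt2)\).
  \item \emph{Third term.} Using \(g\le\norm H\norm V\) and AM--GM, \(g\norm V\le\norm H\norm V^2\le D^2/4\). Hence \(\sqrt2\rho\eta^2g\norm V\le \sqrt2\rho D^2/(4\cdot 32\rho^2D^2)\le 1/64\).
  \item \emph{Fourth term.} Similarly \(g^2\le\norm H^2\norm V^2\le 4D^3/27\), since \(x^2y\) with \(x+y=D\) is maximized at \(x=2D/3\). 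Hence \(\tfrac{\sqrt2\rho}{4}\eta^3g^2\) is of order \(\rho^{-2}\) times a tiny constant, and in particular at most \(1/100\).
\end{itemize}

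Adding up, the subtracted terms total at most \(1/4+0.18+0.02<3/4\). Therefore \(J(V_+)\le J(V)-\tfrac{\eta}{4}g^2\).

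The main obstacle is the cubic and quartic terms. They become harmless only because of the pointwise bound \(g\le\norm H\norm V\) together with the AM--GM inequality relating \(\norm H\norm V^2\) to \(D^2\). Once these are in place, the rest is bookkeeping of constants.
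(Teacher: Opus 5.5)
Your proof is correct. The paper gives no argument of its own for this lemma; it imports it from \citet{mu2026loraGD}. Your route is the natural derivation from the companion inequality the paper also imports, Lemma~\ref{lem:appendix-modified-descent}: substitute \(U=-\eta\nabla J(V)\), then use \(\norm{\nabla J(V)}\le\norm{\nabla F(BA)}\norm V\) together with \(\eta\le(4\sqrt2\,\rho D)^{-1}\) to make the cubic and quartic remainders negligible.

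There is one small arithmetic slip. Your stated bounds \(1/64+1/100\approx0.026\) exceed the \(0.02\) in your final tally. The sharp values \(\sqrt2/(128\rho)\) and \(1/(3456\rho^2)\) do sum to less than \(0.02\), and in any case the total stays below \(0.46<3/4\), so nothing breaks.

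You are right to flag that \(\rho\ge1\) is needed for the \(\eta\norm{\nabla F(BA)}\) term, because that remainder in Lemma~\ref{lem:appendix-modified-descent} carries no factor of \(\rho\). If \(\rho<1\) were allowed, the claim could fail. Take the scalar linear loss \(F(w)=2w\) with \(a=b=1\) and the otherwise admissible constant \(\rho=10^{-3}\). Then \(\eta=1\) and \(J(V_+)=J(V)\), although \(\norm{\nabla J(V)}^2=8\).

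By contrast, the cap \(\eta\le1\) is used in your argument only to make \(\eta\) finite when \(D=0\).
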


\begin{lemma}[{\normalfont\citealp[Eq.~(23)]{mu2026loraGD}}]
\label{lem:appendix-gradient-bound}
If Assumptions~\ref{ass:smooth} and~\ref{ass:lower-bounded} hold, then
\[
  \norm{\nabla F(X)}^2\le2\rho(F(X)-F_\star)
  \qquad\text{for every }X\in\R^{m\times n}.
\]
\end{lemma}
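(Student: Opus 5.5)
The proof is the standard "one gradient step cannot go below the infimum" argument, using only Assumptions~\ref{ass:smooth} and~\ref{ass:lower-bounded}. It works directly in the matrix variable $X\in\R^{m\times n}$ with the Frobenius geometry, and never involves the factorization.

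The first step is the quadratic upper bound implied by Assumption~\ref{ass:smooth}: for all $X,Y$,
\[
  F(Y)\le F(X)+\inner{\nabla F(X)}{Y-X}+\frac{\rho}{2}\norm{Y-X}^2 .
\]
Since $F$ is only assumed continuously differentiable, I would prove this rather than quote it. Write
\[
  F(Y)-F(X)=\int_0^1\inner{\nabla F(X+\tau(Y-X))}{Y-X}\,d\tau .
\]
Subtract $\inner{\nabla F(X)}{Y-X}$, then bound the remaining integrand by Cauchy--Schwarz and the Lipschitz bound $\norm{\nabla F(X+\tau(Y-X))-\nabla F(X)}\le\rho\tau\norm{Y-X}$. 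Integrating $\tau$ over $[0,1]$ gives the factor $\rho/2$.

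The second step takes the gradient step $Y=X-\rho^{-1}\nabla F(X)$, which is allowed because $\rho\ge1>0$. Substituting into the bound gives
\[
  F(Y)\le F(X)-\frac1\rho\norm{\nabla F(X)}^2+\frac{1}{2\rho}\norm{\nabla F(X)}^2
  =F(X)-\frac{1}{2\rho}\norm{\nabla F(X)}^2 .
\]
Assumption~\ref{ass:lower-bounded} gives $F_\star\le F(Y)$. Combining the two yields $\norm{\nabla F(X)}^2\le 2\rho\,(F(X)-F_\star)$, which is the claim.

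No step is a real obstacle. The only point needing care is the integral form of the descent inequality, which requires just $C^1$ regularity and the global Lipschitz bound and no second derivatives. It is also worth noting that the inequality holds at every $X$ because lower boundedness is global, so $Y$ may lie anywhere in $\R^{m\times n}$. This is exactly why the estimate applies to $X=B_tA_t$ along the trajectory in Section~\ref{sec:deterministic}, where it keeps $\norm{H_t}$ uniformly bounded via $F(B_tA_t)\le J(V_0)$.
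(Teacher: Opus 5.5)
Your proof is correct: the integral form of the descent lemma, the step $Y=X-\rho^{-1}\nabla F(X)$, and the global lower bound $F(Y)\ge F_\star$ together give the standard argument for this inequality. The paper gives no proof of its own and imports the result from Mu and Klabjan (their Eq.~(23)), so your derivation supplies that proof along the expected route.
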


\section{Proof of Section~\ref{sec:preliminaries}}
\label{app:preliminaries}

\subsection{Proof of Lemma~\ref{lem:factor-oracle-variance}}

\begin{proof}
Fix \(V=[B^\top,\,A]^\top\). By the definition of \(\widehat G\) and
Assumption~\ref{ass:finite-variance}, we have
\begin{align*}
  \E_{\xi}[\widehat G(V;\xi)]
  &=
  \begin{bmatrix}
    \E_{\xi}[\widehat H(BA;\xi)]A^\top\\
    \E_{\xi}[\widehat H(BA;\xi)]^\top B
  \end{bmatrix}\\
  &=
  \begin{bmatrix}
    \nabla F(BA)A^\top\\
    \nabla F(BA)^\top B
  \end{bmatrix}
  =\nabla J(V),
\end{align*}
where the last equality follows from \eqref{eq:factor-gradient}.

Set \(Z:=\widehat H(BA;\xi)-\nabla F(BA)\). Then
\eqref{eq:factor-gradient} gives
\[
  \widehat G(V;\xi)-\nabla J(V)
  =\begin{bmatrix}ZA^\top\\ Z^\top B\end{bmatrix}.
\]
Since \(\norm{XY}\le\norm X\,\|Y\|_2\le\norm X\norm Y\), we have
\[
  \norm{ZA^\top}
  \le\norm Z\,\|A^\top\|_2
  \le\norm Z\norm A
  \qquad\text{and}\qquad
  \norm{Z^\top B}
  \le\norm{Z^\top}\,\|B\|_2
  \le\norm Z\norm B.
\]
Combining these two bounds gives
\begin{align*}
  \norm{\widehat G(V;\xi)-\nabla J(V)}^2
  &=\norm{ZA^\top}^2+\norm{Z^\top B}^2\\
  &\le\norm Z^2\bigl(\norm A^2+\norm B^2\bigr).
\end{align*}
Taking expectations and applying Assumption~\ref{ass:finite-variance} at
\(X=BA\) yields
\begin{align*}
  \E_{\xi}\norm{\widehat G(V;\xi)-\nabla J(V)}^2
  &\le
  \bigl(\norm A^2+\norm B^2\bigr)\E_{\xi}\norm Z^2\\
  &\le\sigma^2\bigl(\norm A^2+\norm B^2\bigr)
   =\sigma^2\norm V^2,
\end{align*}
where \(\norm V^2=\norm A^2+\norm B^2\).
\end{proof}

\section{Proof of Section~\ref{sec:deterministic}}
\label{app:deterministic}

\subsection{Proof of Theorem~\ref{thm:deterministic}}

\begin{proof}
Set \(g_t=\norm{\nabla J(V_t)}\) and
\(\Delta=J(V_0)-F_\star\).  Applying
Lemma~\ref{lem:appendix-one-step} and summing gives
\begin{equation}
  \sum_{t=0}^{T-1}\eta_tg_t^2\le4\Delta.
  \label{eq:appendix-budget}
\end{equation}
Thus \(J(V_t)\le J(V_0)\), and
Lemma~\ref{lem:appendix-gradient-bound}, applied at \(B_tA_t\), yields
\begin{equation}
  \norm{\nabla F(B_tA_t)}\le\sqrt{2\rho\Delta}.
  \label{eq:appendix-H-bound}
\end{equation}

Let \(a=(4\sqrt2\,\rho)^{-1}\).  The definition of \(\eta_t\) gives
\begin{equation}
  \eta_t\norm{V_t}^2\le a
  \qquad\text{and}\qquad
  \eta_t\le1.
  \label{eq:appendix-step-controls}
\end{equation}
The update and Cauchy--Schwarz yield
\begin{align*}
  \norm{V_{t+1}}^2
  &\le\norm{V_t}^2+2\eta_t\norm{V_t}g_t+\eta_t^2g_t^2\\
  &\le\norm{V_t}^2+2\sqrt a\,\sqrt{\eta_t}g_t+\eta_tg_t^2.
\end{align*}
Summing over \(s=0,\ldots,t-1\), applying Cauchy--Schwarz, and using
\eqref{eq:appendix-budget} yield
\begin{align}
  \norm{V_t}^2
  &\le\norm{V_0}^2
  +2\sqrt a\sum_{s=0}^{t-1}\sqrt{\eta_s}g_s
  +\sum_{s=0}^{t-1}\eta_sg_s^2\nonumber\\
  &\le\norm{V_0}^2
  +2\sqrt{at}\left(\sum_{s=0}^{t-1}\eta_sg_s^2\right)^{1/2}
  +\sum_{s=0}^{t-1}\eta_sg_s^2\nonumber\\
  &\le\norm{V_0}^2+4\sqrt{a\Delta}\,\sqrt t+4\Delta.
  \label{eq:appendix-factor-growth}
\end{align}

For later use, define
\[
  C_0=\norm{V_0}^2+4\Delta+\sqrt{2\rho\Delta}
  \qquad\text{and}\qquad
  C_1=4\sqrt{a\Delta}.
\]
Equations \eqref{eq:appendix-H-bound} and
\eqref{eq:appendix-factor-growth} give
\[
  \norm{V_t}^2+\norm{\nabla F(B_tA_t)}
  \le C_0+C_1\sqrt t.
\]
For \(x\ge0\), \(\min\{1/x,1\}\ge1/(1+x)\).  Consequently, we have
\begin{equation}
  \eta_t\ge
  \frac{1}{1+4\sqrt2\,\rho(C_0+C_1\sqrt t)}.
  \label{eq:appendix-step-lower}
\end{equation}
Since \(\sqrt t\le\sqrt T\) for \(0\le t<T\), we have
\begin{equation}
  \sum_{t=0}^{T-1}\eta_t
  \ge\frac{T}{1+4\sqrt2\,\rho(C_0+C_1\sqrt T)}.
  \label{eq:appendix-step-sum}
\end{equation}
Equation~\eqref{eq:appendix-budget} implies
\[
  \min_{0\le t<T}g_t^2
  \le\frac{4\Delta}{\sum_{t=0}^{T-1}\eta_t}.
\]
Substituting \eqref{eq:appendix-step-sum} gives
\begin{equation}
  \min_{0\le t<T}g_t^2
  \le
  \frac{4\Delta\left[1+4\sqrt2\,\rho(C_0+C_1\sqrt T)\right]}{T}.
  \label{eq:appendix-deterministic-explicit}
\end{equation}
For convenience, set
\begin{equation}
\begin{split}
  C:=\max\Bigg\{1,\;16\Delta^2\Bigg[1+4\sqrt2\,\rho\Bigg(
  \norm{V_0}^2+4\Delta+\sqrt{2\rho\Delta}+4\sqrt{\frac{\Delta}{4\sqrt2\,\rho}}\Bigg)\Bigg]^2\Bigg\}.
\end{split}
\label{eq:appendix-deterministic-constant}
\end{equation}
For \(T\ge1\), the right side of
\eqref{eq:appendix-deterministic-explicit} is at most
\[
  4\Delta\left[1+4\sqrt2\,\rho(C_0+C_1)\right]T^{-1/2}
  \le \sqrt C\,T^{-1/2}.
\]
Thus, choosing \(T=\lceil C\epsilon^{-4}\rceil\) gives
\(\norm{\nabla J(V_{\widehat t})}\le\epsilon\).
\end{proof}

\section{Proof of Section~\ref{sec:stochastic}}
\label{app:stochastic}
\subsection{Proof of Proposition~\ref{prop:plain-lora-sgd-hard-instance}}

\begin{proof}
Let \(m=n=r=1\), write the factors as \(b,a\in\R\), and take
\[
  F(w)=\frac12w^2
  \qquad\text{and}\qquad
  \widehat H(w;\xi)=w+\xi.
\]
Let \(\xi\) have the continuous symmetric density
\[
  p(z)=\frac{3}{2(1+|z|)^4}
  \qquad\text{for}\qquad
  z\in\R.
\]
Direct integration gives
\[
  \E[\xi]=0,
  \qquad
  \E[\xi^2]=1,
  \qquad
  \E[|\xi|^q]=+\infty\quad\text{for every }q\ge3.
\]
Thus Assumption~\ref{ass:smooth} holds with \(\rho=1\),
Assumption~\ref{ass:lower-bounded} holds with \(F_\star=0\), and
Assumption~\ref{ass:finite-variance} holds with \(\sigma^2=1\).

Initialize \(b_0=a_0=1\).  Equality of the two factors is preserved by
\eqref{eq:plain-lora-sgd}; writing \(b_t=a_t=x_t\) gives
\begin{equation}
  x_{t+1}=x_t\bigl[1-\eta_t(x_t^2+\xi_t)\bigr].
  \label{eq:plain-sgd-scalar-recurrence}
\end{equation}
For fixed \(x_t\ne0\) and \(\eta_t>0\), continuity of the density gives
\[
  \Pr_{\xi_t}(x_{t+1}=0)
  =\Pr_{\xi_t}(\xi_t=\eta_t^{-1}-x_t^2)=0.
\]
Thus \(x_t\ne0\) almost surely for every \(t\), by induction from
\(x_0=1\).

Under the same conditioning, \(x_t\) and \(\eta_t\) are fixed, and
\eqref{eq:plain-sgd-scalar-recurrence} becomes
\[
  x_{t+1}=c_t+d_t\xi_t,
  \qquad
  c_t:=x_t(1-\eta_tx_t^2),
  \qquad
  d_t:=-\eta_tx_t.
\]
The coefficient \(d_t\) is nonzero almost surely.  Set
\(R_t=2|c_t|/|d_t|<\infty\).  Whenever \(|\xi_t|\ge R_t\), the reverse
triangle inequality gives
\[
  |x_{t+1}|
  =|c_t+d_t\xi_t|
  \ge |d_t||\xi_t|-|c_t|
  \ge \frac{|d_t|}{2}|\xi_t|.
\]
Therefore, after conditioning on the past, we have
\[
  \E_{\xi_t}[|x_{t+1}|^3]
  \ge \frac{|d_t|^3}{8}
    \E_{\xi_t}\!\left[
      |\xi_t|^3\mathbf 1\{|\xi_t|\ge R_t\}
    \right]
  =+\infty.
\]
The last equality follows because removing any bounded interval from a
random variable with \(\E|\xi_t|^3=+\infty\) leaves an infinite third-moment
tail.  Since this conditional expectation is infinite almost surely,
averaging over the past gives \(\E|x_{t+1}|^3=+\infty\).  The same argument
with fourth powers gives \(\E|x_{t+1}|^4=+\infty\).
Finally, the objective and its gradient satisfy
\[
  J(x_t,x_t)=\frac12x_t^4,
  \qquad
  \nabla J(x_t,x_t)=\begin{bmatrix}x_t^3\\x_t^3\end{bmatrix},
  \qquad
  \norm{\nabla J(x_t,x_t)}=\sqrt2\,|x_t|^3.
\]
Applying these identities at \(t+1\) proves
\eqref{eq:plain-sgd-hard-conclusion} for every \(t\ge1\).
\end{proof}

\subsection{Proof of Lemma~\ref{lem:nsgdm-local-control}}

\begin{proof}
Every update in \eqref{eq:nsgdm-update} has length at most
\(\gamma\).  The triangle inequality therefore gives
\[
  \norm{V_t}\le\norm{V_0}+t\gamma\le R_T.
\]
Because \(\Bcal_{R_T}\) is convex, it also contains every segment joining
two consecutive iterates.

Let \(V,U\in\Bcal_{R_T}\) be written as
\[
  V=\begin{bmatrix}B\\A^\top\end{bmatrix}
  \qquad\text{and}\qquad
  U=\begin{bmatrix}B_U\\A_U^\top\end{bmatrix}.
\]
Their product difference satisfies
\[
  BA-B_UA_U
  =\frac12\left[
    (B-B_U)(A+A_U)+(B+B_U)(A-A_U)
  \right].
\]
Submultiplicativity and Cauchy--Schwarz therefore give
\begin{equation}
  \norm{BA-B_UA_U}
  \le\frac12\norm{V-U}\norm{V+U}
  \le R_T\norm{V-U}.
  \label{eq:local-product-difference}
\end{equation}
Proposition~\ref{prop:factor-geometry} also gives
\(\norm{BA}\le\norm V^2/2\). Hence, for every
\(V\in\Bcal_{R_T}\), we have
\[
  \norm{\nabla F(BA)}
  \le\norm{\nabla F(0)}+\frac{\rho R_T^2}{2}.
\]
For brevity, write
\[
  H_V=\nabla F(BA)
  \qquad\text{and}\qquad
  H_U=\nabla F(B_UA_U).
\]
The block-gradient formula \eqref{eq:factor-gradient} gives
\[
  \nabla J(V)-\nabla J(U)
  =
  \begin{bmatrix}
    (H_V-H_U)A^\top\\
    (H_V-H_U)^\top B
  \end{bmatrix}
  +
  \begin{bmatrix}
    H_U(A-A_U)^\top\\
    H_U^\top(B-B_U)
  \end{bmatrix}.
\]
Using Assumption~\ref{ass:smooth},
\eqref{eq:local-product-difference}, and the preceding bound on \(H_U\)
yields
\begin{align*}
  \norm{\nabla J(V)-\nabla J(U)}
  &\le
  \norm{H_V-H_U}\norm V+\norm{H_U}\norm{V-U}\\
  &\le
  \left(
    \norm{\nabla F(0)}+\frac{3\rho R_T^2}{2}
  \right)\norm{V-U},
\end{align*}
which is \eqref{eq:nsgdm-local-smoothness}.

Finally, Lemma~\ref{lem:factor-oracle-variance} and
\(\norm{V_t}\le R_T\) give
\[
  \E_{\xi_t}[\widehat G(V_t;\xi_t)-\nabla J(V_t)]=0
  \qquad\text{and}\qquad
  \E_{\xi_t}\norm{\widehat G(V_t;\xi_t)-\nabla J(V_t)}^2
  \le\sigma^2\norm{V_t}^2
  \le\sigma^2R_T^2,
\]
which proves \eqref{eq:nsgdm-noise-scale}.
\end{proof}

\subsection{Proof of Lemma~\ref{lem:nsgdm-tracking}}

\begin{proof}
We use the notation
\[
  G_t:=\nabla J(V_t),
  \qquad
  Z_t:=\widehat G(V_t;\xi_t)-G_t,
  \qquad
  E_t:=M_t-G_t,
\]
and let \(q:=1-\alpha\).  Lemma~\ref{lem:nsgdm-local-control} gives
\[
  \E_{\xi_t}[Z_t]=0
  \qquad\text{and}\qquad
  \E_{\xi_t}\norm{Z_t}^2\le\sigma^2R_T^2.
\]
For \(t\ge1\), the error recursion is
\[
  E_t=qE_{t-1}+q(G_{t-1}-G_t)+\alpha Z_t
  \qquad\text{and}\qquad
  E_0=-qG_0+\alpha Z_0.
\]
Unrolling gives
\begin{equation}
  E_t=-q^{t+1}G_0
  +q\sum_{s=1}^tq^{t-s}(G_{s-1}-G_s)
  +\alpha\sum_{s=0}^tq^{t-s}Z_s.
  \label{eq:appendix-error-unroll}
\end{equation}
Local smoothness and the step-length bound imply
\(\norm{G_s-G_{s-1}}\le L_T\gamma\).  Hence the drift term in
\eqref{eq:appendix-error-unroll} satisfies
\[
  \norm{q\sum_{s=1}^tq^{t-s}(G_{s-1}-G_s)}
  \le qL_T\gamma\sum_{k=0}^{t-1}q^k
  \le\frac{L_T\gamma}{\alpha},
\]
since \(1-q=\alpha\).

For \(r<s\), conditional unbiasedness gives
\[
  \E\inner{Z_r}{Z_s}
  =\E_{\xi_0,\ldots,\xi_{s-1}}
    \inner{Z_r}{\E_{\xi_s}[Z_s]}
  =0.
\]
It follows that
\begin{align*}
  \E\norm{\alpha\textstyle\sum_{s=0}^tq^{t-s}Z_s}
  &\le\left(
    \E\norm{\alpha\textstyle\sum_{s=0}^tq^{t-s}Z_s}^2
  \right)^{1/2}\\
  &=\alpha\left(
    \sum_{s=0}^tq^{2(t-s)}\E\norm{Z_s}^2
  \right)^{1/2}\\
  &\le\alpha\sigma R_T
    \left(\sum_{k=0}^{t}q^{2k}\right)^{1/2}\\
  &\le \sigma R_T\sqrt{\frac{\alpha}{2-\alpha}}
  \le \sigma R_T\sqrt\alpha.
\end{align*}
Combining the preceding two estimates with
\eqref{eq:appendix-error-unroll} and the triangle inequality gives, for
every \(0\le t<T\),
\[
  \E\norm{E_t}
  \le q^{t+1}\norm{G_0}
  +\frac{L_T\gamma}{\alpha}
  +\sigma R_T\sqrt\alpha.
\]
Since \(1-q=\alpha\), the initialization weights obey
\[
  \sum_{t=0}^{T-1}q^{t+1}
  =\frac{q(1-q^T)}{1-q}
  \le\frac{1}{\alpha}.
\]
Therefore, averaging the pointwise bound over \(t=0,\ldots,T-1\) yields
\begin{align*}
  \frac1T\sum_{t=0}^{T-1}\E\norm{E_t}
  &\le
  \frac{\norm{G_0}}{T}\sum_{t=0}^{T-1}q^{t+1}
  +\frac{L_T\gamma}{\alpha}
  +\sigma R_T\sqrt\alpha\\
  &\le
  \frac{\norm{G_0}}{\alpha T}
  +\frac{L_T\gamma}{\alpha}
  +\sigma R_T\sqrt\alpha.
\end{align*}
Substituting \(E_t=M_t-\nabla J(V_t)\) and
\(G_0=\nabla J(V_0)\) gives \eqref{eq:main-tracking}.
\end{proof}

\subsection{Proof of Lemma~\ref{lem:nsgdm-descent}}

\begin{proof}
Write
\[
  G_t:=\nabla J(V_t)
  \qquad\text{and}\qquad
  E_t:=M_t-G_t.
\]
If \(M_t\ne0\), then we have
\begin{equation}
  \begin{aligned}
    \inner{G_t}{M_t/\norm{M_t}}
    &=\inner{M_t-E_t}{M_t/\norm{M_t}}\\
    &=\norm{M_t}-\inner{E_t}{M_t/\norm{M_t}}\\
    &\ge\norm{M_t}-\norm{E_t}\\
    &\ge\norm{G_t}-2\norm{E_t}.
  \end{aligned}
  \label{eq:appendix-alignment}
\end{equation}
The first inequality follows from Cauchy--Schwarz, and the second follows
from \(\norm{M_t}\ge\norm{G_t}-\norm{E_t}\).

Every update segment is contained in \(\Bcal_{R_T}\).  The local descent
lemma, the update in \eqref{eq:nsgdm-update}, and
\eqref{eq:appendix-alignment} therefore give, whenever \(M_t\ne0\),
\begin{align*}
  J(V_{t+1})
  &\le J(V_t)
    +\inner{G_t}{V_{t+1}-V_t}
    +\frac{L_T}{2}\norm{V_{t+1}-V_t}^2\\
  &=J(V_t)-\gamma\inner{G_t}{M_t/\norm{M_t}}
    +\frac{L_T\gamma^2}{2}\\
  &\le J(V_t)-\gamma\norm{G_t}
    +2\gamma\norm{E_t}+\frac{L_T\gamma^2}{2}.
\end{align*}
If \(M_t=0\), then \(E_t=-G_t\), and the zero update satisfies the same
final inequality.  Substituting the definitions of \(G_t\) and \(E_t\)
gives \eqref{eq:nsgdm-descent}.
\end{proof}

\subsection{Proof of Theorem~\ref{thm:stochastic}}

\begin{proof}
Let \(G_t:=\nabla J(V_t)\).
Summing \eqref{eq:nsgdm-descent} over \(t=0,\ldots,T-1\), taking
expectations, using \(J(V_T)\ge F_\star\), and applying
Lemma~\ref{lem:nsgdm-tracking} yield
\begin{equation}
  \frac1T\sum_{t<T}\E\norm{G_t}
  \le
  \frac{\Delta}{\gamma T}
  +\frac{2\norm{G_0}}{\alpha T}
  +\frac{2L_T\gamma}{\alpha}
  +2\sigma R_T\sqrt\alpha
  +\frac{L_T\gamma}{2}.
  \label{eq:main-stochastic-bound}
\end{equation}
Introduce the constants
\[
  \overline R=\norm{V_0}+1,
  \qquad
  \overline L=\norm{\nabla F(0)}+\frac{3\rho\overline R^2}{2},
  \qquad
  G_{\rm init}
  =\norm{V_0}\left(\norm{\nabla F(0)}
    +\frac{\rho}{2}\norm{V_0}^2\right).
\]
Proposition~\ref{prop:factor-geometry} and the \(\rho\)-smoothness of \(F\)
give \(\norm{G_0}\le G_{\rm init}\).
With \(\gamma=T^{-7/8}\), for \(T\ge1\), we have
\[
  R_T\le\overline R T^{1/8},
  \qquad
  L_T\le\overline L T^{1/4},
  \qquad
  \sigma R_T\le\sigma\overline R T^{1/8}.
\]
Substituting these inequalities and \(\alpha=T^{-1/2}\) into
\eqref{eq:main-stochastic-bound}, and using
\(T^{-c}\le T^{-1/8}\) for \(c\ge1/8\), yield
\begin{equation}
  \frac1T\sum_{t=0}^{T-1}\E\norm{G_t}
  \le
  \left(\Delta+2G_{\rm init}+2\sigma\overline R
    +\frac52\overline L\right)T^{-1/8}.
  \label{eq:appendix-explicit-stochastic}
\end{equation}
For uniformly sampled \(I\), the left side is
\(\E\norm{\nabla J(V_I)}\).  Thus one may take
\(C_{\rm mom}=\Delta+2G_{\rm init}+2\sigma\overline R
+5\overline L/2\).
In particular, \(C_{\rm mom}\) satisfies
\[
  C_{\rm mom}
  =\mathcal{O}\!\left(
    1+\Delta
    +(1+\norm{V_0})\bigl(\norm{\nabla F(0)}+\sigma\bigr)
    +\rho\bigl(\norm{V_0}^3+(1+\norm{V_0})^2\bigr)
  \right).
\]
The choice of \(T\) in \eqref{eq:stochastic-complexity} therefore makes
this expectation at most \(\epsilon\).  Since
Algorithm~\ref{alg:lora-nsgdm} makes one stochastic oracle evaluation per
iteration, the same bound gives its stochastic oracle complexity.
\end{proof}

\section{Proof of Section~\ref{sec:variance-reduction}}
\label{app:variance-reduced}

\subsection{Proof of Proposition~\ref{prop:factor-mss}}

\begin{proof}
Write the two points as
\begin{equation*}
  V=\begin{bmatrix}B\\A^\top\end{bmatrix},
  \qquad
  U=\begin{bmatrix}B_U\\A_U^\top\end{bmatrix},
  \qquad
  R=\max\{\norm U,\norm V\}.
\end{equation*}
The product difference satisfies
\begin{align*}
  BA-B_UA_U
  &=\frac12\left[
    (B-B_U)(A+A_U)+(B+B_U)(A-A_U)
  \right]
\end{align*}
and hence Cauchy--Schwarz gives
\begin{align*}
  \norm{BA-B_UA_U}
  &\le\frac12\left[
    \norm{B-B_U}\norm{A+A_U}
    +\norm{B+B_U}\norm{A-A_U}
  \right]\\
  &\le\frac12\norm{V-U}\norm{V+U}
  \le R\norm{V-U}.
\end{align*}
Set \(\widehat H_V=\widehat H(BA;\xi)\) and
\(\widehat H_U=\widehat H(B_UA_U;\xi)\).
The block definition of the factored oracle gives
\begin{align*}
  \widehat G(V;\xi)-\widehat G(U;\xi)
  ={}&
  \begin{bmatrix}
    (\widehat H_V-\widehat H_U)A^\top\\
    (\widehat H_V-\widehat H_U)^\top B
  \end{bmatrix}
  +
  \begin{bmatrix}
    \widehat H_U(A-A_U)^\top\\
    \widehat H_U^\top(B-B_U)
  \end{bmatrix}.
\end{align*}
For the first summand, submultiplicativity and
Assumption~\ref{ass:mss} give
\begin{align*}
  &\E_{\xi}\!\left[
    \norm{(\widehat H_V-\widehat H_U)A^\top}^2+
    \norm{(\widehat H_V-\widehat H_U)^\top B}^2
  \right]\\
  &\le\norm V^2
    \E_{\xi}\norm{\widehat H_V-\widehat H_U}^2\\
  &\le \ell^2R^2\norm{BA-B_UA_U}^2
  \le \ell^2R^4\norm{V-U}^2.
\end{align*}
To control the second summand, first note that the finite-variance
assumption gives
\begin{align*}
  \E_{\xi}\norm{\widehat H_U}^2
  &\le\norm{\nabla F(B_UA_U)}^2+\sigma^2\\
  &\le\left(\norm{\nabla F(0)}
    +\ell\norm{B_UA_U}\right)^2+\sigma^2\\
  &\le\left(\norm{\nabla F(0)}
    +\frac{\ell R^2}{2}\right)^2+\sigma^2.
\end{align*}
The first inequality uses Assumption~\ref{ass:finite-variance}.  As noted
before Proposition~\ref{prop:factor-mss}, the two oracle assumptions imply
that \(\nabla F\) is \(\ell\)-Lipschitz.  Together with
\(\norm{B_UA_U}\le R^2/2\), this gives the remaining two inequalities.
It follows that
\begin{align*}
  &\E_{\xi}\!\left[
    \norm{\widehat H_U(A-A_U)^\top}^2+
    \norm{\widehat H_U^\top(B-B_U)}^2
  \right]\\
  &\qquad\le
  \left[
    \left(\norm{\nabla F(0)}+\frac{\ell R^2}{2}\right)^2+
    \sigma^2
  \right]\norm{V-U}^2.
\end{align*}
Applying \(\norm{X+Y}^2\le2\norm X^2+2\norm Y^2\) to the two summands
now gives
\begin{equation}
  \E_{\xi}\norm{\widehat G(V;\xi)-\widehat G(U;\xi)}^2
  \le
  2\left[
    \ell^2R^4+
    \left(\norm{\nabla F(0)}+\frac{\ell R^2}{2}\right)^2+
    \sigma^2
  \right]\norm{V-U}^2.
  \label{eq:factor-mss-intermediate}
\end{equation}
Inequality~\eqref{eq:factor-mss} follows from
\(2(a+b)^2\le4a^2+4b^2\).
\end{proof}

\subsection{Auxiliary tracking lemma}

\begin{lemma}
\label{lem:vr-tracking}
Suppose Assumptions~\ref{ass:finite-variance} and~\ref{ass:mss} hold, and
let the iterates be generated by Algorithm~\ref{alg:lora-storm}.  Set
\[
  R_T=\norm{V_0}+T\eta,
  \qquad S_T=\sigma R_T,
  \qquad S_0=\sigma\norm{V_0},
\]
and
\[
  \Lambda_T^2
  =2\ell^2R_T^4
   +2\left[\left(\norm{\nabla F(0)}+\frac{\ell R_T^2}{2}\right)^2
   +\sigma^2\right].
\]
Then the estimator satisfies
\begin{equation}
  \frac1T\sum_{t=0}^{T-1}
  \E\norm{D_t-\nabla J(V_t)}
  \le \frac{S_0}{aT}+\sqrt{2a}\,S_T
  +\frac{\Lambda_T\eta}{\sqrt a}.
  \label{eq:appendix-vr-tracking}
\end{equation}
\end{lemma}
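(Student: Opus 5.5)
Following the pattern of the proof of Lemma~\ref{lem:nsgdm-tracking}, I would write \(G_t:=\nabla J(V_t)\), \(E_t:=D_t-G_t\), and \(q:=1-a\), and derive a linear recursion for \(E_t\) driven by martingale-difference noise. The update \eqref{eq:storm-update} gives, for \(t\ge0\),
\begin{equation*}
  E_t=qE_{t-1}+a Z_t+q Y_t,
  \qquad
  Z_t:=\widehat G(V_t;\xi_t)-G_t,
  \qquad
  Y_t:=\widehat G(V_t;\xi_t)-\widehat G(V_{t-1};\xi_t)-(G_t-G_{t-1}).
\end{equation*}
The initialization \(V_{-1}=V_0\) and \(D_{-1}=\widehat G(V_0;\xi_0)\) give \(Y_0=0\) and \(E_{-1}=Z_0'\), the noise of the first sample (which coincides with \(\xi_0\)). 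Unrolling gives
\begin{equation*}
  E_t=q^{t+1}E_{-1}+\sum_{s=0}^t q^{t-s}\bigl(aZ_s+qY_s\bigr).
\end{equation*}
Since \(V_t,V_{t-1}\) are determined before \(\xi_t\) is drawn, \(\E_{\xi_t}[Z_t]=\E_{\xi_t}[Y_t]=0\). Hence the summands \(W_s:=aZ_s+qY_s\) are martingale differences. The one subtlety is that \(E_{-1}\) and \(Z_0\) share \(\xi_0\). I would handle this by bounding the \(q^{t+1}E_{-1}\) term separately using \(\E\norm{E_{-1}}\le S_0\), which follows from Lemma~\ref{lem:factor-oracle-variance} and Jensen. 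Averaging \(q^{t+1}S_0\) over \(t\) gives the \(S_0/(aT)\) term, exactly as in the NSGDM proof.

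For the martingale part, I would use Jensen and orthogonality:
\begin{equation*}
  \E\Bigl\|\sum_s q^{t-s}W_s\Bigr\|_F\le\Bigl(\sum_s q^{2(t-s)}\E\norm{W_s}^2\Bigr)^{1/2}.
\end{equation*}
Then I would split \(\E\norm{W_s}^2\le 2a^2\E\norm{Z_s}^2+2\E\norm{Y_s}^2\). Since \(\norm{V_s}\le R_T\) by the unit-length normalized steps (as in Lemma~\ref{lem:nsgdm-local-control}), Lemma~\ref{lem:factor-oracle-variance} gives \(\E\norm{Z_s}^2\le S_T^2\). For \(Y_s\), variance is at most the second moment, so Proposition~\ref{prop:factor-mss} in its intermediate form \eqref{eq:factor-mss-intermediate}, with \(R=R_T\) and \(\norm{V_s-V_{s-1}}\le\eta\), gives \(\E\norm{Y_s}^2\le\Lambda_T^2\eta^2\). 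This explains why the lemma defines \(\Lambda_T\) via the intermediate constant.

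Using \(\sum_k q^{2k}\le 1/a\), the martingale part is at most
\begin{equation*}
  \sqrt{2a^2S_T^2/a}+\sqrt{2\Lambda_T^2\eta^2/a}=\sqrt{2a}\,S_T+\sqrt2\,\Lambda_T\eta/\sqrt a .
\end{equation*}
To recover the stated constant \(\Lambda_T\eta/\sqrt a\) rather than \(\sqrt2\,\Lambda_T\eta/\sqrt a\), I would avoid the crude splitting. Instead I would keep the \(Z\) and \(Y\) sums separate via the triangle inequality, each controlled by its own orthogonality argument. This gives \(\sqrt a\,S_T+\Lambda_T\eta/\sqrt a\), which is within the claimed bound since \(\sqrt a\le\sqrt{2a}\). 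Averaging over \(t<T\) then yields \eqref{eq:appendix-vr-tracking}.

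The main obstacle is bookkeeping rather than analysis. I need to check that the \(\xi_0\) coupling between \(D_{-1}\) and the \(t=0\) step does not break orthogonality; it does not, because \(Y_0=0\) and \(E_{-1}\) is handled separately. I also need to confirm that Proposition~\ref{prop:factor-mss} applies pathwise conditionally on the past, which holds because \(V_t,V_{t-1}\) are measurable before \(\xi_t\) is drawn.
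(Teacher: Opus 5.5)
Your proposal is correct and uses the same ingredients as the paper's proof. These are the error recursion $E_t=qE_{t-1}+aZ_t+qY_t$ (the paper's $\Xi_t,\Theta_t$), the bounds $\E\norm{Z_t}^2\le S_T^2$ and $\E\norm{Y_t}^2\le\Lambda_T^2\eta^2$ from Lemma~\ref{lem:factor-oracle-variance} and \eqref{eq:factor-mss-intermediate}, and martingale orthogonality. The difference is bookkeeping: the paper starts at $E_0=Z_0$, so the $\xi_0$ coupling never arises, and runs a scalar recursion for $\E\norm{E_t}^2$ with the crude split, absorbing its factor $2$ via $2q^2\le 2-a=(1-q^2)/a$. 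You instead unroll and separate the two noise sums by the triangle inequality, which is equally valid and even gives the slightly sharper $\sqrt a\,S_T$.
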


\begin{proof}
Write
\[
  G_t:=\nabla J(V_t)
  \qquad\text{and}\qquad
  E_t:=D_t-G_t.
\]
Since \(V_{-1}=V_0\) and
\(D_{-1}=\widehat G(V_0;\xi_0)\), we have
\[
  D_0
  =\widehat G(V_0;\xi_0)
   +(1-a)\bigl(D_{-1}-\widehat G(V_0;\xi_0)\bigr)
  =\widehat G(V_0;\xi_0).
\]
Thus, the initial estimation error satisfies
\[
  \E\norm{E_0}^2
  =\E\norm{\widehat G(V_0;\xi_0)-G_0}^2
  \le \sigma^2\norm{V_0}^2=S_0^2.
\]
For \(t\ge1\), define
\[
  \Xi_t=\widehat G(V_t;\xi_t)-G_t
  \qquad\text{and}\qquad
  \Xi_{t-1}^{(t)}
  =\widehat G(V_{t-1};\xi_t)-G_{t-1}.
\]
Also let \(\Theta_t=\Xi_t-\Xi_{t-1}^{(t)}\) and \(q=1-a\).
With this notation, the estimator update becomes
\[
  D_t=\widehat G(V_t;\xi_t)
  +q\bigl(D_{t-1}-\widehat G(V_{t-1};\xi_t)\bigr).
\]
Subtracting \(G_t\) and using \(a=1-q\) therefore yields
\begin{equation}
  E_t=qE_{t-1}+a\Xi_t+q\Theta_t.
  \label{eq:appendix-vr-error}
\end{equation}
Let \(X_t:=\widehat G(V_t;\xi_t)-\widehat G(V_{t-1};\xi_t)\).
Lemma~\ref{lem:factor-oracle-variance} gives
\[
  \E_{\xi_t}[\Xi_t]=0
  \qquad\text{and}\qquad
  \E_{\xi_t}\norm{\Xi_t}^2\le S_T^2.
\]
Because \(X_t\) uses the same sample at \(V_t\) and \(V_{t-1}\),
we have
\[
  \E_{\xi_t}[X_t]=G_t-G_{t-1}
  \qquad\text{and}\qquad
  \Theta_t=X_t-\E_{\xi_t}[X_t].
\]
Hence, by \eqref{eq:factor-mss-intermediate} and
\(\norm{V_t-V_{t-1}}\le\eta\), we have
\(\E_{\xi_t}[\Theta_t]=0\) and
\begin{align*}
  \E_{\xi_t}\norm{\Theta_t}^2
  &=\E_{\xi_t}\norm{X_t-\E_{\xi_t}[X_t]}^2\\
  &\le\E_{\xi_t}\norm{X_t}^2\le\Lambda_T^2\norm{V_t-V_{t-1}}^2\le\Lambda_T^2\eta^2.
\end{align*}
The cross term in \eqref{eq:appendix-vr-error} satisfies
\[
  \E_{\xi_t}\inner{E_{t-1}}{a\Xi_t+q\Theta_t}
  =\inner{E_{t-1}}
    {a\E_{\xi_t}[\Xi_t]+q\E_{\xi_t}[\Theta_t]}
  =0.
\]
Taking expectations in \eqref{eq:appendix-vr-error} and using
\(\norm{X+Y}^2\le2\norm X^2+2\norm Y^2\) give
\begin{align*}
  \E\norm{E_t}^2
  &=q^2\E\norm{E_{t-1}}^2
    +\E\norm{a\Xi_t+q\Theta_t}^2\\
  &\le q^2\E\norm{E_{t-1}}^2
    +2a^2\E\norm{\Xi_t}^2
    +2q^2\E\norm{\Theta_t}^2\\
  &\le q^2\E\norm{E_{t-1}}^2
    +2a^2S_T^2+2q^2\Lambda_T^2\eta^2.
\end{align*}
Since \(1-q^2=a(2-a)\), we have
\begin{align*}
  \E\norm{E_t}^2
  &\le q^{2t}S_0^2
    +\left(2a^2S_T^2+2q^2\Lambda_T^2\eta^2\right)
      \sum_{k=0}^{t-1}q^{2k}\\
  &\le q^{2t}S_0^2+2aS_T^2
    +\frac{\Lambda_T^2\eta^2}{a}.
\end{align*}
Jensen's inequality and
\(\sqrt{x+y+z}\le\sqrt x+\sqrt y+\sqrt z\) give
\[
  \E\norm{E_t}
  \le q^tS_0+\sqrt{2a}\,S_T+\frac{\Lambda_T\eta}{\sqrt a}.
\]
Therefore, averaging this bound over \(t=0,\ldots,T-1\) gives
\begin{align*}
  \frac1T\sum_{t=0}^{T-1}\E\norm{D_t-\nabla J(V_t)}
  &=\frac1T\sum_{t=0}^{T-1}\E\norm{E_t}\\
  &\le\frac{S_0}{T}\sum_{t=0}^{T-1}q^t
    +\sqrt{2a}\,S_T+\frac{\Lambda_T\eta}{\sqrt a}\\
  &\le\frac{S_0}{aT}
    +\sqrt{2a}\,S_T+\frac{\Lambda_T\eta}{\sqrt a}.
\end{align*}
\end{proof}

\subsection{Proof of Theorem~\ref{thm:variance-reduced}}

\begin{proof}
Set \(G_t=\nabla J(V_t)\) and \(E_t=D_t-G_t\).  For the path and noise
scales, define
\[
  R_T=\norm{V_0}+T\eta,
  \qquad
  S_T=\sigma R_T,
  \qquad
  S_0=\sigma\norm{V_0}.
\]
The corresponding local smoothness constant is
\[
  L_T=\norm{\nabla F(0)}+\frac{3\ell R_T^2}{2},
\]
and the oracle-difference constant is
\[
  \Lambda_T^2
  =2\ell^2R_T^4
   +2\left[\left(\norm{\nabla F(0)}+\frac{\ell R_T^2}{2}\right)^2
   +\sigma^2\right].
\]
Algorithm~\ref{alg:lora-storm} satisfies
\[
  V_{t+1}=V_t-\eta\frac{D_t}{\norm{D_t}}.
\]
Hence \(\norm{V_{t+1}-V_t}\le\eta\), all update segments lie in
\(\Bcal_{R_T}\), and Lemma~\ref{lem:nsgdm-local-control} with
\(\rho=\ell\) gives \(L_T\)-smoothness of \(J\) on \(\Bcal_{R_T}\).

The normalized-direction argument in \eqref{eq:appendix-alignment}, with
\(D_t\) in place of \(M_t\), and the local descent lemma give
\[
  J(V_{t+1})
  \le J(V_t)-\eta\norm{G_t}+2\eta\norm{E_t}
  +\frac{L_T\eta^2}{2}.
\]
Summing this inequality over \(t=0,\ldots,T-1\), using
\(J(V_T)\ge F_\star\), and taking expectations yield
\[
  \frac1T\sum_{t=0}^{T-1}\E\norm{G_t}
  \le \frac{\Delta}{\eta T}
    +\frac2T\sum_{t=0}^{T-1}\E\norm{E_t}
    +\frac{L_T\eta}{2}.
\]
Applying Lemma~\ref{lem:vr-tracking} gives the main estimate
\begin{equation}
  \frac1T\sum_{t=0}^{T-1}\E\norm{G_t}
  \le\frac{\Delta}{\eta T}+\frac{2S_0}{aT}
  +2\sqrt{2a}\,S_T+\frac{2\Lambda_T\eta}{\sqrt a}
  +\frac{L_T\eta}{2}.
  \label{eq:appendix-vr-main-bound}
\end{equation}
For general power laws \(a=T^{-p}\) and \(\eta=T^{-b}\), with
\(0<p\le1\) and \(0<b\le1\), the path bound
gives \(R_T,S_T=\mathcal{O}(T^{1-b})\) and
\(L_T,\Lambda_T=\mathcal{O}(T^{2-2b})\).  The five terms on the right-hand
side of \eqref{eq:appendix-vr-main-bound} therefore have decay exponents
\[
  1-b,\qquad 1-p,\qquad b-1+\frac p2,
  \qquad 3b-2-\frac p2,\qquad 3b-2.
\]
Let \(\delta\) denote the minimum of these five exponents.  The first,
third, and fourth exponents imply
\[
  b\le1-\delta,
  \qquad
  p\ge2(1+\delta-b),
  \qquad
  p\le2(3b-2-\delta).
\]
Combining the last two inequalities gives
\(b\ge3/4+\delta/2\), and hence \(\delta\le1/6\).  Equality is attained by
\(p=2/3\) and \(b=5/6\), for which all five exponents are at least
\(1/6\).  This gives the parameter choices in
Algorithm~\ref{alg:lora-storm}.

For an explicit constant, define
\[
  \overline R=\norm{V_0}+1,
  \qquad
  \overline H=\norm{\nabla F(0)}+\frac{\ell\overline R^2}{2},
  \qquad
  \overline L=\norm{\nabla F(0)}+\frac{3\ell\overline R^2}{2}.
\]
We also set
\[
  \overline\Lambda
  =\left[2\ell^2\overline R^4
    +2(\overline H^2+\sigma^2)\right]^{1/2}.
\]
For \(a=T^{-2/3}\) and \(\eta=T^{-5/6}\), we have
\[
  R_T\le\overline R T^{1/6},\qquad
  S_T\le\sigma\overline R T^{1/6},\qquad
  L_T\le\overline L T^{1/3},\qquad
  \Lambda_T\le\overline\Lambda T^{1/3}.
\]
Substitution into \eqref{eq:appendix-vr-main-bound} gives
\begin{equation}
  \frac1T\sum_{t=0}^{T-1}\E\norm{G_t}
  \le C_{\rm vr}T^{-1/6},
  \label{eq:appendix-vr-explicit}
\end{equation}
where one may take
\[
  C_{\rm vr}
  =\Delta+2\sigma\norm{V_0}
   +2\sqrt2\,\sigma\overline R
   +2\overline\Lambda+\frac{\overline L}{2}.
\]
In particular, \(C_{\rm vr}\) satisfies
\[
  C_{\rm vr}
  =\mathcal{O}\!\left(
    1+\Delta
    +(1+\norm{V_0})\bigl(\norm{\nabla F(0)}+\sigma\bigr)
    +\ell(1+\norm{V_0})^2
  \right).
\]
For the output of Algorithm~\ref{alg:lora-storm}, we have
\[
  \E\norm{\nabla J(V_I)}
  =\frac1T\sum_{t=0}^{T-1}\E\norm{G_t}
  \le C_{\rm vr}T^{-1/6}.
\]
The choice in \eqref{eq:variance-reduced-complexity} makes the right-hand
side at most \(\epsilon\).
\end{proof}

\section{Experimental Details}\label{app:experimental-details}
This appendix specifies the experimental protocol used in
Section~\ref{sec:experiments}. Hyperparameters are selected on validation data;
the test splits are not used. Unless explicitly noted, the tasks, architectures,
and LoRA configurations follow \citet{mu2026loraGD}. We fix each task and model
configuration before tuning the optimizers.

\subsection{Task and model configurations}

Table~\ref{tab:experiment-hyperparameters} summarizes the principal
hyperparameters for the four reported settings. Our \(c_{\mathrm{adapt}}\), \(c_{\mathrm{adapt2}}\), and \(c_{\mathrm{norm}}\) correspond to \(\alpha^{\mathrm{adapt}}\), \(\alpha^{\mathrm{adapt2}}\), and \(\alpha^{\mathrm{norm}}\), respectively, in \citet{mu2026loraGD}. Panels (a) and (b) of
Figure~\ref{fig:feature-convergence} use 60 epochs. Figure~\ref{fig:resnet-convergence} reports
250 epochs of ResNet-18 training. The small- and large-initialization panels of
Figure~\ref{fig:tinyllama-initialization} use training horizons of 500 and 250
optimizer updates, respectively.
Each optimizer iteration uses one minibatch. The first LoRA-STORM update uses one factor-gradient evaluation. Each subsequent update uses two evaluations on the same fresh minibatch, one at the current factors and one at the preceding factors. Thus, \(T\) updates require \(2T-1\) stochastic-oracle calls.

\begin{table}[t]
  \centering
  \caption{Experiment hyperparameters.}
  \label{tab:experiment-hyperparameters}
  \small
  \setlength{\tabcolsep}{4pt}
  \begin{tabular}{@{}lcccc@{}}
    \toprule
    Hyperparameter & Logistic & ResNet-18 & TinyLlama small & TinyLlama large \\
    \midrule
    Batch size & 512 & 512 & 16 & 16 \\
    LoRA rank & 4 & 20 & 32 & 32 \\
    Input size & 512 & $32\times32\times3$ & 512 tokens & 512 tokens \\
    Training horizon & 60 epochs & 250 epochs & 500 updates & 250 updates \\
    \midrule
    $c_{\mathrm{adapt}}$ & 1.0 & -- & -- & -- \\
    $c_{\mathrm{adapt2}}$ & 0.8 & 15 & 0.2 & 562.9130541 \\
    $c_{\mathrm{norm}}$ & 0.06 & 0.1 & 0.02 & 0.07931165 \\
    \algname{} $(\alpha,\gamma)$ & $(0.2,0.1)$ & $(0.1,0.1)$ & $(0.2,0.1)$ & $(0.5,0.2)$ \\
    \vralgname{} $(a,\eta_0)$ & $(0.5,0.1)$ & -- & -- & -- \\
    \bottomrule
  \end{tabular}
\end{table}

\paragraph{Logistic regression on fixed ResNet-18 features.}
We use the 50,000-example torchvision CIFAR-10 training split and make a
deterministic 45,000/5,000 training/validation split. Images are resized to
\(224\times224\), normalized by the ImageNet mean and standard deviation, and
encoded once by the penultimate layer of torchvision's ImageNet-1K V1
ResNet-18. The resulting 512-dimensional tensors are cached and shared by
every method. A randomly initialized linear classifier weight is frozen and
augmented by the unscaled rank-4 update \(BA\). We initialize \(A\) with PyTorch's
Kaiming-uniform rule and \(B=0\), so every optimizer starts from the same
predictor. Training uses shuffled minibatches, cross-entropy, and no data
augmentation or weight decay. With 45,000 examples and batch size 512, an
epoch contains 88 updates; the final minibatch is smaller than 512.

\paragraph{CIFAR-10 classification with convolutional LoRA.}
We use the \texttt{uoft-cs/cifar10} Hugging Face mirror and initialize a
torchvision ResNet-18 from scratch following \citet{mu2026loraGD}. Rank-20 adapters are applied to  every \texttt{conv1} and \texttt{conv2} in the residual blocks, and the
convolutional downsample projections. The classification head is trained in
full rank. All remaining ResNet parameters, including BatchNorm affine
parameters, are frozen. The model remains in evaluation mode during
optimization. This freezes the BatchNorm running
statistics but leaves gradient computation enabled for the LoRA factors and
classifier. Because the model is initialized from scratch and we perform no
BatchNorm calibration pass, the stored running means and variances remain at
their initialization. This evaluation-mode protocol is applied identically to
all methods. Inputs are converted to tensors and normalized channelwise by
mean and standard deviation \(0.5\), without augmentation. The deterministic
45,000/5,000 split is shared by all methods, the official test split is
unused, and 250 epochs correspond to 22,000 updates.

\paragraph{TinyLlama instruction tuning.}
We use TinyLlama-1.1B-Chat-v1.0 and Alpaca. We reserve 2,000 examples for validation. Tuning evaluates a fixed 256-example
subset, while reported terminal validation uses all 2,000 examples. Each
example is rendered with the standard Alpaca instruction, optional input, and
response fields, then truncated or padded to 512 tokens. Padding uses the
end-of-sequence token and is masked from the causal-language-model loss.

We adapt only \texttt{q\_proj}, \texttt{k\_proj}, \texttt{v\_proj}, and
\texttt{o\_proj}. The rank and LoRA scaling parameter are both 32, so the
adapter multiplier is one; biases remain frozen. The configured LoRA dropout
is 0.05, but the model is kept in evaluation mode and hence dropout is
inactive. The \(A\) factor is initialized as a Gaussian random matrix with standard deviation \(\sigma=10^{-3}\) in the small initialization setting, or \(\sigma=1/r=0.03125\) in the large initialization setting. In both settings, we set \(B=0\) initially. Each update uses one minibatch of 16 examples.

\subsection{Baselines and selected hyperparameters}

For coefficient \(c\), the LoRA-GD stepsize rules used for LoRA-GD algorithms are
\begin{align*}
  \eta_t^{\mathrm{adapt}}
  &=\min\!\left\{\frac{c}{\norm{V_t}^2+\norm{h_t}},1\right\},
  &
  \eta_t^{\mathrm{adapt2}}
  &=\min\!\left\{\frac{c}{\norm{V_t}^2+\sqrt{\ell_t}},1\right\},
  &
  \eta_t^{\mathrm{norm}}
  &=\min\!\left\{\frac{c}{\sqrt{\norm{g_t}}},1\right\},
\end{align*}
where \(V_t\) is the pair of LoRA factors and \(\ell_t\) is the current
minibatch loss. The quantities \(g_t\) and \(h_t\) are the gradients with
respect to the factors and the matrix \(BA\), respectively. The
\(\eta_t^{\mathrm{adapt}}\) rule is used only for logistic regression, as in
\citet{mu2026loraGD}. Consistent with Section~\ref{sec:experiments}, we also
evaluate \vralgname{} only for logistic regression because it uses two
stochastic-oracle evaluations per iteration. Table~\ref{tab:experiment-hyperparameters}
records every coefficient used in Figures~\ref{fig:feature-convergence}--\ref{fig:tinyllama-initialization}.
The fixed LoRA-GD learning rates
\(\{0.03,0.05,0.07\}\) for logistic regression and
\(\{0.03,0.04,0.045\}\) for convolutional ResNet-18 were reproduced and retained
in the artifact bundle, but are omitted from the main comparison for the
reason stated in Section~\ref{sec:experiments}.
Hyperparameters are selected on validation data, and are ranked by training-loss AUC unless stated otherwise. Terminal validation loss is used as the tie breaker.

\paragraph{\algname{} calibration.}
For every task we use the same coarse grid
\[
  \alpha\in\{0.05,0.1,0.2,0.5,1.0\}
  \qquad\text{and}\qquad
  \gamma\in\{0.01,0.02,0.05,0.1,0.2\}.
\]
For logistic regression, all 25 candidates are trained for 5,280 steps and
ranked by epoch-loss AUC, selecting \((0.2,0.1)\). For convolutional ResNet-18,
candidates are trained for 50 epochs. We minimize batch-loss AUC and select \((0.1,0.1)\). TinyLlama
uses a training budget of 500 optimizer updates. The selected
parameters are \((0.2,0.1)\) for small initialization and \((0.5,0.2)\) for
large initialization.

\paragraph{Large-initialization LoRA-GD calibration.} In the large-initialization setting, the hyperparameters for the LoRA-GD algorithms are not specified in the open-source code repository of \cite{mu2026loraGD}. Moreover, we cannot reuse the hyperparameters for small initialization because
large initialization substantially changes the factor norm in the denominator
of the LoRA-GD stepsize rules. We therefore tune a coarse grid of target initial
effective stepsizes
\(\eta_0\in\{0.05,0.10,0.20\}\). A fixed pilot minibatch gives
\(\ell_0=1.9944235\), \(\norm{V_0}=75.0181198\), and
\(\norm{g_0}=2.5161352\). By the stepsize rules, we have
\[
\begin{split}
 c_{\mathrm{adapt2}}&\in
 \{281.4565270,562.9130541,1125.8261081\}\\
 \text{and}\qquad
 c_{\mathrm{norm}}&\in
 \{0.07931165,0.15862330,0.31724661\}.
\end{split}
\]
We select \(562.9130541\) and \(0.07931165\), respectively. Thus, the chosen
\(\eta_t^{\mathrm{adapt2}}\) rule targets pilot \(\eta_0=0.10\), while the
\(\eta_t^{\mathrm{norm}}\) rule targets pilot \(\eta_0=0.05\). The non-round coefficients follow
algebraically from the pilot quantities above.

\paragraph{\vralgname{} calibration.}
We use the cosine-decayed normalized stepsize
\[
  \eta_t=\eta_0\left[r_{\min}
  +\frac{1-r_{\min}}{2}\left(1+\cos\!\left(\frac{\pi t}{T-1}\right)\right)\right]
  \qquad\text{for}\qquad
  t=0,\ldots,T-1,
\]
which decreases from \(\eta_0\) to \(r_{\min}\eta_0\) over a run of
\(T\) steps. We fix \(r_{\min}=0.001\). The tuning grid is
\[
\begin{aligned}
  a&\in\{0.1, 0.2, 0.5, 1\}\\
  \text{and}\qquad
  \eta_0&\in\{0.1,0.2,0.5, 1\}.
\end{aligned}
\]
We select \((a,\eta_0)=(0.5,0.1)\). Because the schedule depends on \(T\),
the 5,280- and 88,000-step trajectories are run separately using the same
selected hyperparameters.

\subsection{Evaluation and plotting}

The loss panels report pre-update minibatch training loss.
Figure~\ref{fig:feature-convergence} uses moving-average windows 10, 200, and
200 in panels (a)--(c), respectively. Figure~\ref{fig:resnet-convergence} uses
window 50 in all three panels; panels (b) and (c) report the actual factor-update
norm and minibatch-gradient norm. Figure~\ref{fig:tinyllama-initialization}
uses window 10 in both panels. Its curves prepend the common pre-update loss at
step zero.

\subsection{Hardware and software}

The logistic-regression experiments were run on an Apple M2 CPU under macOS 15.0.1. The convolutional ResNet-18 experiments
were run on an AMD EPYC 7542 server with one NVIDIA GeForce RTX 4090 GPU (24
GB). The TinyLlama experiments were run on an Intel Xeon Gold 6530 server with
two NVIDIA GeForce RTX 4090 GPUs (48 GB each). Both GPU servers use Ubuntu
22.04.5 and CUDA 13.0. Each training trajectory
uses one GPU; multiple GPUs are used only to execute independent
trajectories concurrently.

\end{document}